\newtheorem{thm}{Theorem}
\newtheorem{apdx_thm}{Theorem}
\newtheorem{lemma}{Lemma}
\newtheorem{apdx_lemma}{Lemma}
\title{Model-based Reinforcement Learning with Multi-step Plan Value Estimation}
\author{%
  Haoxin Lin\thanks{Equal Contribution\quad $^\dagger$Corresponding Author},~~
    Yihao Sun\footnotemark[1],~~
    Jiaji Zhang,~~
    Yang Yu\textsuperscript{$\dagger$} \\   National Key Laboratory for Novel Software Technology, Nanjing University, Nanjing, Jiangsu, China\\
    \{linhx, sunyh, zhangjj\}@lamda.nju.edu.cn, yuy@nju.edu.cn
}
\date{}
\begin{document}

\maketitle

\begin{abstract}
A promising way to improve the sample efficiency of reinforcement learning is model-based methods, in which many explorations and evaluations can happen in the learned models to save real-world samples. However, when the learned model has a non-negligible model error, sequential steps in the model are hard to be accurately evaluated, limiting the model's utilization. This paper proposes to alleviate this issue by introducing multi-step plans to replace multi-step actions for model-based RL. We employ the multi-step plan value estimation, which evaluates the expected discounted return after executing a sequence of action plans at a given state, and updates the policy by directly computing the multi-step policy gradient via plan value estimation. The new model-based reinforcement learning algorithm MPPVE (\textbf{M}odel-based Planning \textbf{P}olicy Learning with Multi-step \textbf{P}lan \textbf{V}alue \textbf{E}stimation) shows a better utilization of the learned model and achieves a better sample efficiency than state-of-the-art model-based RL approaches.
\end{abstract}

\section{Introduction}
Reinforcement Learning (RL) has attracted close attention in recent years. Despite its empirical success in simulated domains and games, insufficient sample efficiency is still one of the critical problems hindering the application of RL in reality \cite{yu2018sample}. One promising way to improve the sample efficiency is to train and utilize world models \cite{yu2018sample}, which is known as model-based RL (MBRL). World model learning has recently received significant developments, including the elimination of the compounding error issue \cite{xu2021error} and causal model learning studies \cite{chen2022arxiv,zhu2022arxiv} for achieving high-fidelity models, and has also gained real-world applications \cite{mail,shang2021mlj}. Nevertheless, this paper focus on the dyna-style MBRL framework \cite{dyna-q} that augments the replay buffer by the world model generated data for off-policy reinforcement learning.

In dyna-style MBRL, the model is often learned by supervised learning to fit the observed transition data, which is simple to train but exhibits non-negligible model error \cite{xu2021error}. Specifically, consider a generated $k$-step model rollout $s_t,a_t,\hat{s}_{t+1},a_{t+1},...,\hat{s}_{t+k}$, where $\hat{s}$ stands for fake states. The deviation error of fake states increases with $k$ since the error accumulates gradually as the state transitions in imagination. If updated on the fake states with a large deviation, the policy will be misled by policy gradients given by biased state-action value estimation. The influence of model error on the directions of policy gradients is demonstrated in Figure \ref{cos_sim_of_pg}.

Therefore, the dyna-style MBRL often introduces techniques to reduce the impact of the model error. For example, MBPO \cite{mbpo} proposes the branched rollouts scheme with a gradually growing branch length to truncate imaginary model rollouts, avoiding the participation of unreliable fake samples in policy optimization. BMPO \cite{bmpo} further truncates model rollouts into a shorter branch length than MBPO after the same number of steps of environmental sampling since it can make imaginary rollouts from both forward and backward directions with the bidirectional dynamics model.

\begin{figure}[h]
    \centering
    \includegraphics[width=0.75\linewidth]{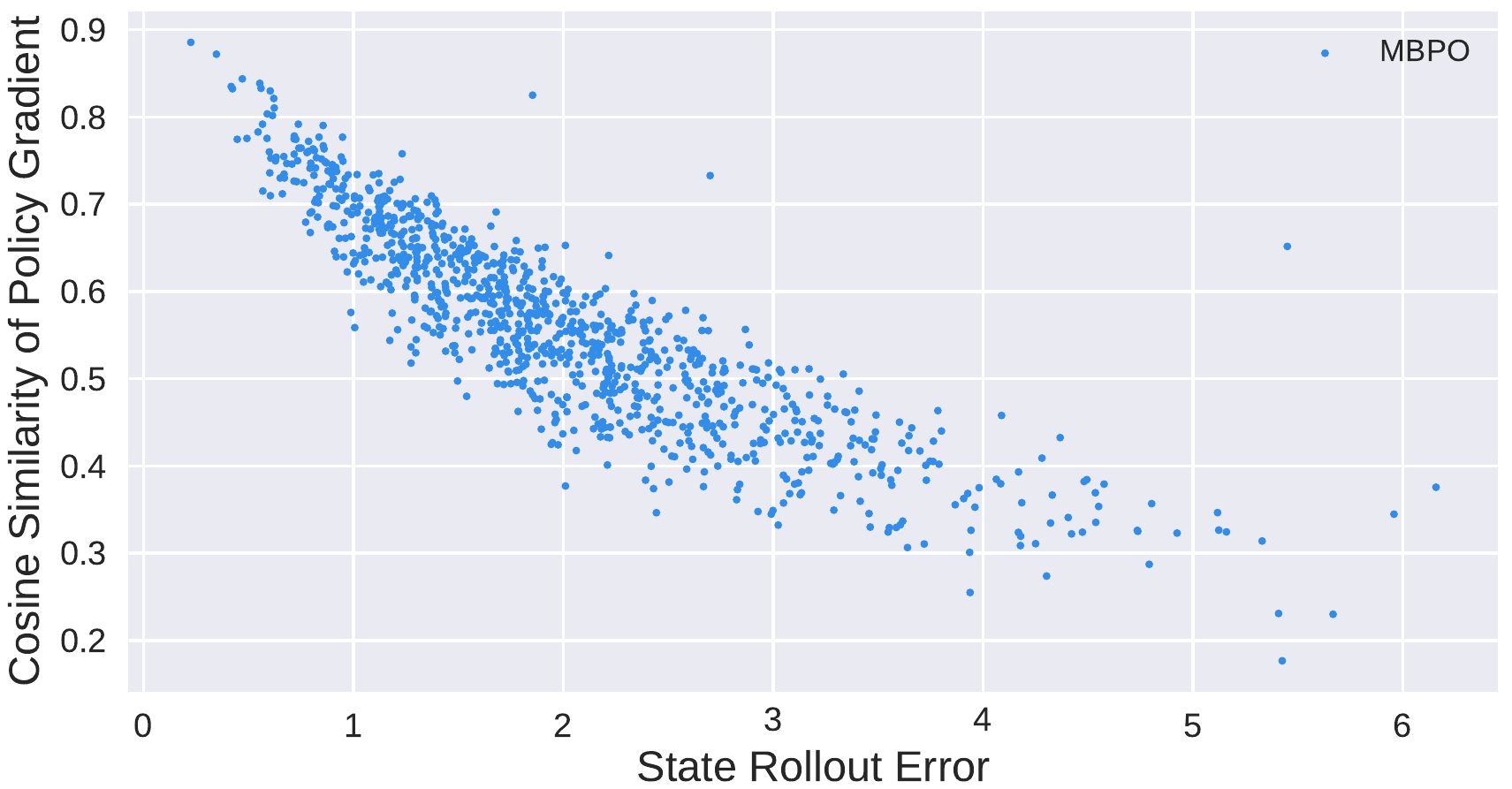}
    \caption{A simulated experiment that reveals the influence of model error on the directions of policy gradients. We make real branched rollouts in the real world and utilize the learned model to generate the fake, starting from some real states. For each fake rollout and its corresponding real rollout, we show their deviation along with the normalized cosine similarity between their multi-step policy gradients.}
    \label{cos_sim_of_pg}
\end{figure}

\begin{figure*}[t!]
\centering
\subfigure[Data generation. Environmental data is stored in $\mathcal{D}_{\mathrm{env}}$, while model data is stored in $\mathcal{D}_{\mathrm{model}}$.]{
    \centering
    \includegraphics[height=0.087\linewidth]{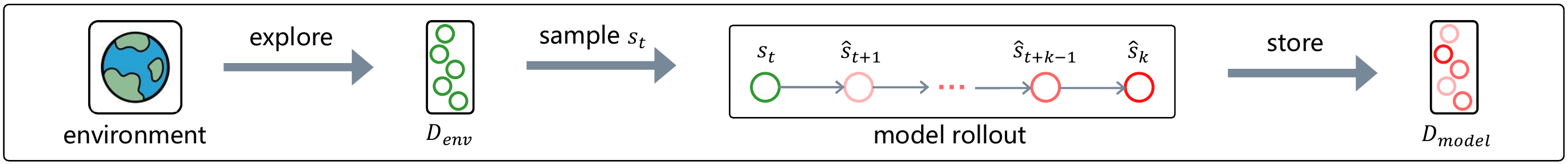}
    \label{fig0(a)}
}\\
\subfigure[Update actor with action-value.]{
    \centering
    \includegraphics[height=0.2\linewidth]{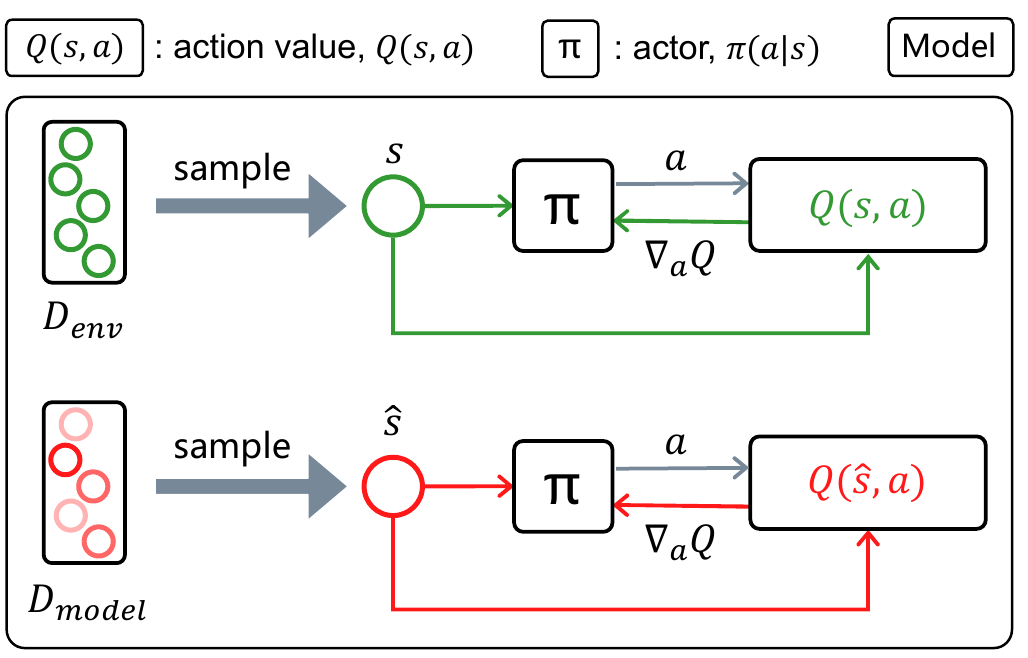}
    \label{fig0(b)}
}%
\hspace{-0.2cm}
\subfigure[Update actor with multi-step plan value.]{
    \centering
    \includegraphics[height=0.2\linewidth]{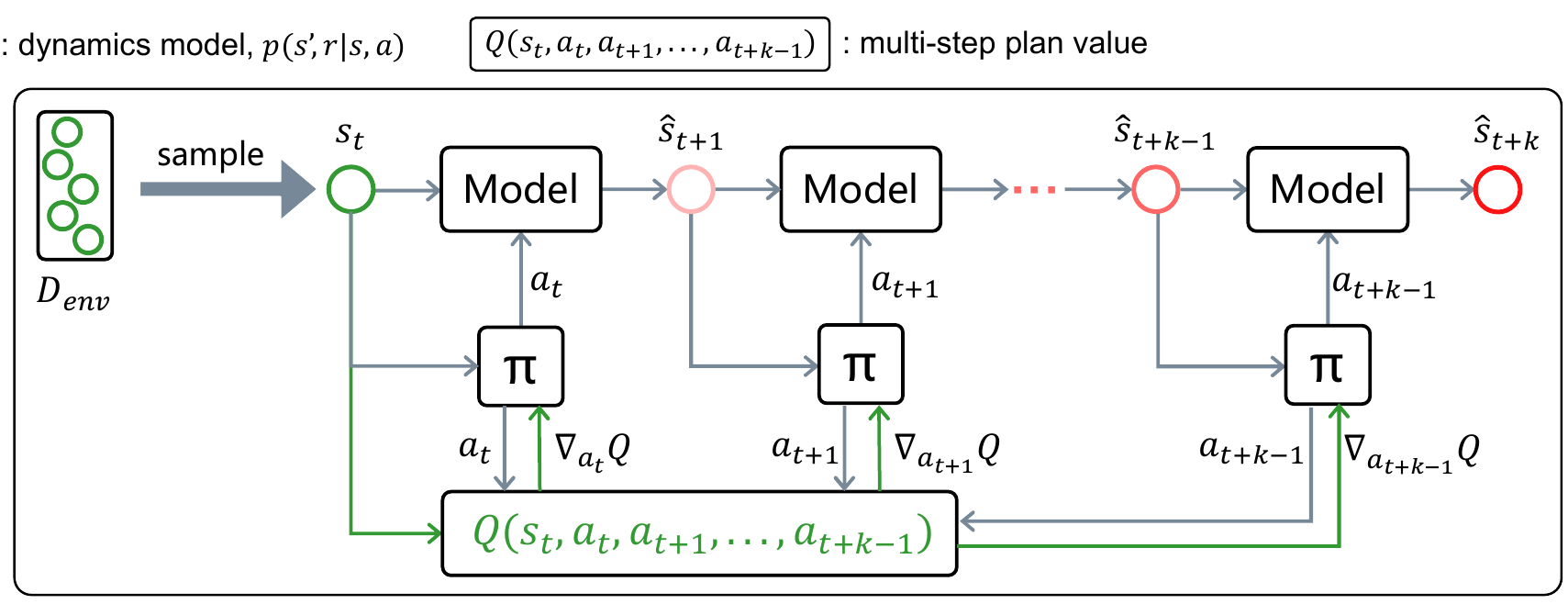}
    \label{fig0(c)}
}%
\centering
\caption{A comparison between MPPVE and previous model-based methods in updating actor. (a) Data generation, where green $s_t$ stands for the real environmental state, red $\hat{s}_{t+1},...,\hat{s}_{t+k}$ stand for fake states, and the darker the red becomes, the greater the deviation error is in expectation. (b) Illustration of how previous model-based methods update actor with action-value estimation. (c) Illustration of how MPPVE updates actor with $k$-step plan value estimation. The $k$-step policy gradients are given by plan value estimation directly when the actor plans starting from the real environmental state.
}
\label{fig0}
\end{figure*}

We argue that, while the above previous methods tried to reduce the impact of the accumulated model error by reducing the rollout steps, avoiding the explicit reliance on the rolled-out fake states can be helpful. This paper proposes employing $k$-step plan value estimation $Q(s_t,a_t,...,a_{t+k-1})$ to evaluate sequential action plans. When using the $k$-step rollout data to update the policy, we can compute the policy gradient only on the starting real state $s_t$, but not on any fake states. Therefore, the $k$-step policy gradients directly given by plan value are less influenced by the model error. Based on this key idea, we formulate a new model-based algorithm called \textbf{M}odel-based Planning \textbf{P}olicy Learning with Multi-step \textbf{P}lan \textbf{V}alue \textbf{E}stimation (MPPVE). The difference between MPPVE and previous model-based methods in updating the actor is demonstrated in Figure \ref{fig0}.

In general, our contributions are summarized as follows:
\begin{itemize}
    \item We present a tabular planning policy iteration method that alternates between planning policy evaluation and planning policy improvement, and theoretically prove the convergence of the optimal policy.
    \item We propose a new model-based algorithm called MPPVE for general continuous settings based on the theoretical tabular planning policy iteration, which updates the policy by directly computing multi-step policy gradients via plan value estimation for short plans starting from real environmental states, mitigating the misleading impacts of the compounding error.
    \item We empirically verify that multi-step policy gradients computed by MPPVE are less influenced by model error and more accurate than those computed by previous model-based RL methods.
    \item We show that MPPVE outperforms recent state-of-the-art model-based algorithms in terms of sample efficiency while retaining competitive performance close to the convergence of model-free algorithms on MuJoCo \cite{mujoco} benchmarks.
\end{itemize}

\section{Related Work}

This work is related to dyna-style MBRL and multi-step planning. 

Dyna-style MBRL methods generate some fake transitions with a dynamics model for accelerating value approximation or policy learning. MVE \cite{mve} uses a dynamics model to simulate the short-term horizon and Q-Learning to estimate the long-term value, improving the quality of target values for training. SLBO \cite{slbo} regards the dynamics model as a simulator and directly uses TRPO \cite{trpo} to optimize the policy with whole trajectories sampled in it. Moreover, MBPO \cite{mbpo} builds on SAC \cite{sac}, which is an off-policy RL algorithm, and updates the policy with a mixture of the data from the real environment and imaginary branched rollouts. Some recent dyna-style MBRL methods pay attention to reducing the influences of model error. For instance, M2AC \cite{m2ac} masks the high-uncertainty model-generated data with a masking mechanism. BMPO \cite {bmpo} utilizes both the forward and backward model to separate the compounding error in different directions, which can acquire model data with less error than only using the forward model. This work also adopts the dyna-style MBRL framework and focuses on mitigating the impact of the accumulated model error.

Multi-step planning methods usually utilize the learned dynamics model to make plans. Model Predictive Control (MPC) \cite{mpc} obtains an optimal action sequence by sampling multiple sequences and applying the first action of the sequence to the environment. MB-MF \cite{mb-mf} adopts the random-shooting method as an instantiation of MPC, which samples several action sequences randomly and uniformly in a learned neural model. PETS \cite{pets} uses CEM \cite{cem} instead, which samples actions from a distribution close to previous samples that yielded high rewards to improve the optimization efficiency. The planning can be incorporated into the differentiable neural network architecture to learn the planning policy end-to-end directly \cite{pinet, upn, dan}. Furthermore, MAAC \cite{maac} proposes to estimate the policy gradients by backpropagating through the learned dynamics model using the path-wise derivative estimator during model-based planning. This work also learns the planning policy end-to-end, but backpropagates the policy gradients through multi-step plan value instead, avoiding the explicit reliance on the rolled-out fake states.

The concept of multi-step sequential actions is introduced in other model-based approaches. \cite{msmbrl, m3, ke2018modeling, che2018combining} propose a multi-step dynamics model to directly output the outcome of executing a sequence of actions. Concretely, \cite{msmbrl} aims to use the multi-step model for value-function optimization in the context of actor-critic framework. \cite{m3} combats the compounding error problem by making rollouts in the multi-step model. \cite{ke2018modeling} focuses on building a multi-step model that reasons about the long-term future and uses this for efficient planning and exploration. \cite{che2018combining} leverages the multi-step model to provide the imaginary cumulative rewards as control variates for policy optimization. Unlike these, we involve the concept of multi-step sequential actions in the value function instead of the dynamics model.

Multi-step plan value used in this paper is also presented in GPM \cite{gpm}. GPM can generate actions not only for the current step but also for several future steps via the plan value function, which brings exploration benefits. Our approach differs from GPM in the following two aspects: 1) GPM adopts the model-free paradigm while ours adopts the model-based paradigm; 2) GPM aims to enhance exploration and therefore present a plan generator to output multi-step actions based on the plan value, while we propose model-based planning policy improvement based on the plan value estimation for less influence of compounding error and higher sample efficiency. 

\section{Preliminaries}
A tuple $(\mathcal{S},\mathcal{A},p,r,\gamma)$ is considered to describe Markov Decision Process (MDP), where $\mathcal{S}$ and $\mathcal{A}$ are state and action spaces respectively, probability density function $p:\mathcal{S}\times\mathcal{S}\times\mathcal{A}\to[0,\infty)$ represents the transition distribution over $s_{t+1}\in\mathcal{S}$ conditioned on $s_t\in\mathcal{S}$ and $a_t\in\mathcal{A}$, $r:\mathcal{S}\times\mathcal{A}\times\mathbb{R}\to[0,\infty)$ is the reward distribution conditioned on $s_t\in\mathcal{S}$ and $a_t\in\mathcal{A}$, and $\gamma$ is the discount factor. We will use $\rho^\pi:\mathcal{S}\to [0,\infty)$ to denote on-policy distribution over states induced by dynamics function $p(s_{t+1}|s_t,a_t)$ and policy $\pi(a_t|s_t)$. The goal of RL is to find the optimal policy that maximizes the expectation of cumulative discounted reward: $\mathbb{E}_{\rho^\pi}\left[\sum_{t=0}^\infty \gamma^t r(s_t,a_t)\right]$.

Recent MBRL methods aim to build a model of dynamics function using supervised learning with data $\mathcal{D}_{\mathrm{env}}$ collected via interaction in the real environment. Then fake data $\mathcal{D}_{\mathrm{model}}$ generated via model rollouts will be used for an RL algorithm additionally with real data to improve the sample efficiency.

\section{Method}
In this section, we will first derive tabular planning policy iteration, and verify that the optimal policy can be attained with a convergence guarantee. Then we will present a practical neuron-based algorithm for general continuous environments based on this theory.

\subsection{Derivation of Planning Policy Iteration}
Planning policy iteration is a multi-step extension of policy iteration for optimizing planning policy that alternates between planning policy evaluation and planning policy improvement. With the consideration of theoretical analysis, our derivation will focus on the tabular setting.

\subsubsection{Planning Policy}
With the dynamics function $p$, at any state, the agent can generate a plan consisting of a sequence of actions to perform in the next few steps in turn. We denote the $k$-step planning policy as $\boldsymbol{\pi}^k$, then given state $s_t$, the plan $\boldsymbol{\tau}_t^k=(a_t,a_{t+1},...,a_{t+k-1})$ can be predicted with
\begin{equation}
\label{pp1}
\boldsymbol{\tau}_t^k\sim\boldsymbol{\pi}^k(\cdot|s_t).
\end{equation}
Concretely, for $m\in[0,k-1]$, we have
\begin{equation}
\label{pp2}
\quad a_{t+m} \sim \pi(\cdot|s_{t+m}),
\end{equation}
\begin{equation}
\label{pp3}
s_{t+m+1} \sim p(\cdot|s_{t+m},a_{t+m}),
\end{equation}
\begin{equation}
\label{pp4}
\boldsymbol{\pi}^k(\boldsymbol{\tau}_t^k|s_t)=\prod_{m=0}^{k-1}\pi(a_{t+m}|s_{t+m}).
\end{equation}
The planning policy just gives a temporally consecutive plan of fixed length $k$, not a full plan that plans actions until the termination can be reached. 

\subsubsection{Planning Policy Evaluation}

Given a stochastic policy $\pi\in\Pi$, the $k$-step plan value \cite{gpm} function is defined as
\begin{align}
&Q^\pi(s_t,\boldsymbol{\tau}_t^k)=Q^\pi(s_t,a_t,a_{t+1},...,a_{t+k-1})\nonumber\\
=&\mathbb{E}_{p,r,\pi}\left[\sum_{m=0}^{k-1}\gamma^m r_{t+m}+\gamma^k \sum_{m=0}^{\infty}\gamma^m r_{t+k+m}\right]\\
=&\label{dp}\mathbb{E}_{p,r}\left[\sum_{m=0}^{k-1}\gamma^m r_{t+m}+\gamma^k \mathbb{E}_{\hat{\boldsymbol{\tau}}^k\sim \boldsymbol{\pi}^k}\left[Q^\pi(s_{t+k},\hat{\boldsymbol{\tau}}^k)\right]\right].
\end{align}
The former $k$ items $(r_t,r_{t+1},...,r_{t+k-1})$ are instant rewards respectively for $k$-step actions in the plan $\boldsymbol{\tau}_t^k$ taken at state $s_t$ step by step, while the following items are future rewards for starting making decisions according to $\pi$ from state $s_{t+k}$.

According to the recursive Bellman equation \eqref{dp}, the extended Bellman backup operator $\mathcal{T}^\pi$ can be written as
\begin{equation}
\mathcal{T}^\pi Q(s_t,\boldsymbol{\tau}_t^k)=\mathbb{E}_{p,r}\left[\sum_{m=0}^{k-1}\gamma^m r_{t+m}\right]+\gamma^k\mathbb{E}_{p}\left[ V(s_{t+k})\right],
\end{equation}
where
\begin{equation}
V(s_t) = \mathbb{E}_{\boldsymbol{\tau}_t^k\sim \boldsymbol{\pi}^k}\left[Q(s_{t},\boldsymbol{\tau}_t^k)\right]
\end{equation}
is the state value function. This update equation for $k$-step plan value function is different from multi-step Temporal Difference (TD) Learning \cite{msrl}, though there is naturally multi-step bootstrapping in plan value learning. Specifically, the plan value is updated without bias since the multi-step rewards $(r_t,r_{t+1},...,r_{t+k-1})$ correspond to the input plan, while the multi-step TD Learning for single-step action-value needs importance sampling or Tree-backup \cite{msrl} to correct the bias coming from the difference between target policy and behavior policy.

Starting from any function $Q:\mathcal{S}\times\mathcal{A}^k\to \mathbb{R}$ and applying $\mathcal{T}^\pi$ repeatedly, we can obtain the plan value function of the fixed policy $\pi$.

\begin{lemma}[Planning Policy Evaluation]
\label{lemma1}
Given any initial mapping $Q_0:\mathcal{S}\times\mathcal{A}^k\to \mathbb{R}$ with $|\mathcal{A}|<\infty$, update $Q_i$ to $Q_{i+1}$ with $Q_{i+1}=\mathcal{T}^\pi Q_i$ for all $i\in N$, $\{Q_i\}$ will converge to plan value of policy $\pi$ as $i\to\infty$.
\end{lemma}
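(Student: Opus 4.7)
The plan is to show that the extended Bellman backup operator $\mathcal{T}^\pi$ is a contraction on the space of bounded functions $Q:\mathcal{S}\times\mathcal{A}^k\to\mathbb{R}$ equipped with the sup-norm, with contraction factor $\gamma^k$, and then invoke Banach's fixed-point theorem to conclude convergence to the unique fixed point, which I would identify as $Q^\pi$.

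First I would fix two arbitrary bounded functions $Q_1,Q_2:\mathcal{S}\times\mathcal{A}^k\to\mathbb{R}$ and consider $\mathcal{T}^\pi Q_1-\mathcal{T}^\pi Q_2$ pointwise at an arbitrary $(s_t,\boldsymbol{\tau}_t^k)$. Because the reward-expectation term $\mathbb{E}_{p,r}[\sum_{m=0}^{k-1}\gamma^m r_{t+m}]$ depends only on $\pi$, $p$, $r$, and the input plan, but not on the $Q$ function being updated, it cancels in the difference. What remains is $\gamma^k\,\mathbb{E}_p[V_1(s_{t+k})-V_2(s_{t+k})]$, where $V_i(s)=\mathbb{E}_{\boldsymbol{\tau}\sim\boldsymbol{\pi}^k}[Q_i(s,\boldsymbol{\tau})]$. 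Since $|\mathcal{A}|<\infty$, the space $\mathcal{A}^k$ of plans is finite and $V_i$ is a well-defined convex combination, giving the pointwise bound $|V_1(s)-V_2(s)|\leq\max_{\boldsymbol{\tau}}|Q_1(s,\boldsymbol{\tau})-Q_2(s,\boldsymbol{\tau})|\leq \|Q_1-Q_2\|_\infty$. Taking expectations with respect to the $k$-step transition distribution preserves this bound, so
\begin{equation*}
\|\mathcal{T}^\pi Q_1-\mathcal{T}^\pi Q_2\|_\infty\leq\gamma^k\|Q_1-Q_2\|_\infty,
\end{equation*}
establishing the contraction.

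Next I would apply Banach's fixed-point theorem: since the sup-norm makes the space of bounded $Q$ functions complete and $\gamma^k<1$, the operator $\mathcal{T}^\pi$ has a unique fixed point $Q^\star$, and the iterates $Q_{i+1}=\mathcal{T}^\pi Q_i$ converge to it from any starting $Q_0$. Finally I would verify that the true plan value $Q^\pi$ defined via the expected-return expression already satisfies the Bellman recursion \eqref{dp}, i.e.\ $Q^\pi=\mathcal{T}^\pi Q^\pi$, so by uniqueness $Q^\star=Q^\pi$, completing the lemma.

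I do not anticipate a serious obstacle: the only subtlety is being careful that $V_i$ inside $\mathcal{T}^\pi$ is computed with respect to the fixed policy $\pi$ (not a policy depending on $Q_i$), so the mapping $Q\mapsto V$ is linear and non-expansive in sup-norm. The boundedness assumption on $r$ (implicit from the paper's setup) together with $|\mathcal{A}|<\infty$ and $\gamma\in[0,1)$ makes all the suprema finite, so the contraction argument goes through verbatim as in the classical single-step case, only with $\gamma$ replaced by $\gamma^k$ because the bootstrap happens $k$ steps in the future.
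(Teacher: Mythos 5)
Your proposal is correct and follows essentially the same route as the paper's own proof: the key computation in both is that the reward terms cancel in the difference, leaving a factor $\gamma^k$ times a sup-norm bound on the bootstrapped term, i.e.\ a $\gamma^k$-contraction argument. The only cosmetic difference is that you establish contraction between two arbitrary bounded functions and then invoke Banach's fixed-point theorem (explicitly checking that $Q^\pi$ satisfies the Bellman recursion \eqref{dp}), whereas the paper inlines the same estimate by contracting directly toward $Q^\pi$, whose fixed-point property is implicit from Eq.~\eqref{dp}.
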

\begin{proof}
See Appendix \ref{apdx_lemma1}.
\end{proof}

\subsubsection{Planning Policy Improvement}
After attaining the corresponding plan value, the policy $\pi$ can be updated with
\begin{equation}
\label{policyimprovement}
    \pi_{\mathrm{new}} = \arg\max_{\pi\in\Pi} \sum_{\boldsymbol{\tau}_t^k\in\mathcal{A}^k}\boldsymbol{\pi}^k(\boldsymbol{\tau}_t^k|s_t)Q^{\pi_{\mathrm{old}}}(s_t,\boldsymbol{\tau}_t^k)
\end{equation}
for each state. We will show that $\pi_{\mathrm{new}}$ achieves greater plan value than $\pi_{\mathrm{old}}$ after applying Eq.\eqref{policyimprovement}.

\begin{lemma}[Planning Policy Improvement]
\label{lemma2}
Given any mapping $\pi_{\mathrm{old}}\in\Pi:\mathcal{S}\to\Delta(\mathcal{A})$ with $|\mathcal{A}|<\infty$, update $\pi_\mathrm{old}$ to $\pi_{\mathrm{new}}$ with Eq.\eqref{policyimprovement}, then $Q^{\pi_{\mathrm{new}}}(s_t,\boldsymbol{\tau}_t^k)\geq Q^{\pi_{\mathrm{old}}}(s_t,\boldsymbol{\tau}_t^k)$, $\forall s_t\in\mathcal{S}$, $\boldsymbol{\tau}_t^k\in\mathcal{A}^k$.
\end{lemma}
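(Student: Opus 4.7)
The plan is to adapt the classical policy-improvement argument to the $k$-step plan setting, using the Bellman recursion (Eq.\,\eqref{dp}) to unroll the value of $\pi_\mathrm{old}$ repeatedly and feed in the new planning policy at each bootstrap point. The starting observation is that, by the very definition of $\pi_\mathrm{new}$ in Eq.\,\eqref{policyimprovement}, for every $s\in\mathcal{S}$
\begin{equation*}
V^{\pi_\mathrm{old}}(s)=\sum_{\boldsymbol{\tau}^k\in\mathcal{A}^k}\boldsymbol{\pi}^k_\mathrm{old}(\boldsymbol{\tau}^k|s)\,Q^{\pi_\mathrm{old}}(s,\boldsymbol{\tau}^k)\le \sum_{\boldsymbol{\tau}^k\in\mathcal{A}^k}\boldsymbol{\pi}^k_\mathrm{new}(\boldsymbol{\tau}^k|s)\,Q^{\pi_\mathrm{old}}(s,\boldsymbol{\tau}^k).
\end{equation*}

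Next I would substitute this inequality into the Bellman recursion \eqref{dp} applied to $Q^{\pi_\mathrm{old}}$. Fix any $s_t$ and $\boldsymbol{\tau}_t^k$; then
\begin{equation*}
Q^{\pi_\mathrm{old}}(s_t,\boldsymbol{\tau}_t^k)=\mathbb{E}_{p,r}\!\left[\sum_{m=0}^{k-1}\gamma^m r_{t+m}+\gamma^k V^{\pi_\mathrm{old}}(s_{t+k})\right]\le \mathbb{E}_{p,r}\!\left[\sum_{m=0}^{k-1}\gamma^m r_{t+m}+\gamma^k \mathbb{E}_{\boldsymbol{\tau}^k\sim \boldsymbol{\pi}^k_\mathrm{new}}\!\left[Q^{\pi_\mathrm{old}}(s_{t+k},\boldsymbol{\tau}^k)\right]\right].
\end{equation*}
I would then apply exactly the same bound to the inner $Q^{\pi_\mathrm{old}}(s_{t+k},\boldsymbol{\tau}^k)$ and iterate. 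After $n$ applications we obtain
\begin{equation*}
Q^{\pi_\mathrm{old}}(s_t,\boldsymbol{\tau}_t^k)\le \mathbb{E}\!\left[\sum_{m=0}^{k-1}\gamma^m r_{t+m}+\sum_{j=1}^{n-1}\gamma^{jk}\sum_{m=0}^{k-1}\gamma^m r_{t+jk+m}+\gamma^{nk} V^{\pi_\mathrm{old}}(s_{t+nk})\right],
\end{equation*}
where the expectation is taken over the trajectory in which the initial plan $\boldsymbol{\tau}_t^k$ is executed and every subsequent $k$-tuple of actions is drawn from $\boldsymbol{\pi}^k_\mathrm{new}$.

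Finally I would let $n\to\infty$. Under the standard assumption that rewards are bounded and $\gamma<1$, the residual $\gamma^{nk} V^{\pi_\mathrm{old}}(s_{t+nk})$ vanishes, and the growing partial sum converges to the expected discounted return when starting at $s_t$, executing $\boldsymbol{\tau}_t^k$, and then following $\pi_\mathrm{new}$ forever afterwards. By definition this limit is exactly $Q^{\pi_\mathrm{new}}(s_t,\boldsymbol{\tau}_t^k)$, yielding the claimed inequality for every $s_t$ and $\boldsymbol{\tau}_t^k$.

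The main obstacle I anticipate is not the algebraic induction but handling the bookkeeping cleanly: each unrolling step introduces a fresh expectation over a $k$-step chunk sampled from $\boldsymbol{\pi}^k_\mathrm{new}$, and one must verify that the ``hybrid'' process (fixed first plan, then $\pi_\mathrm{new}$ thereafter) really coincides in distribution with the process defining $Q^{\pi_\mathrm{new}}(s_t,\boldsymbol{\tau}_t^k)$. Once that identification is made and boundedness of rewards is invoked to justify the limit, the rest is mechanical.
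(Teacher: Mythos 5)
Your proposal is correct and follows essentially the same route as the paper's own proof: the argmax definition gives $V^{\pi_\mathrm{old}}(s)\le\sum_{\boldsymbol{\tau}^k}\boldsymbol{\pi}^k_\mathrm{new}(\boldsymbol{\tau}^k|s)Q^{\pi_\mathrm{old}}(s,\boldsymbol{\tau}^k)$, which is then substituted into the Bellman recursion and unrolled repeatedly until the limit yields $Q^{\pi_\mathrm{new}}$. Your version is in fact slightly more careful than the paper's, since you make the $n$-step induction explicit and justify the vanishing of the residual term $\gamma^{nk}V^{\pi_\mathrm{old}}(s_{t+nk})$ via bounded rewards and $\gamma<1$, where the paper compresses this into a ``$\vdots$''.
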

\begin{proof}
See Appendix \ref{apdx_lemma2}.
\end{proof}

\subsubsection{Planning Policy Iteration}
The whole planning policy iteration process alternates between planning policy evaluation and planning policy improvement until the returned sequence of policy $\{\pi_k\}$ converges to the optimal policy $\pi_*$ whose plan value of any state-plan pair is the greatest among all $\pi\in\Pi$.

\begin{thm}[Planning Policy Iteration]
\label{thm1}
Given any initial mapping $\pi_0\in\Pi:\mathcal{S}\to\Delta(\mathcal{A})$ with $|\mathcal{A}|<\infty$, compute corresponding $Q^{\pi_i}$ in planning policy evaluation step and update $\pi_i$ to $\pi_{i+1}$ in planning policy improvement step for all $i\in N$, $\{\pi_i\}$ will converge to the optimal policy $\pi_*$ that $Q^{\pi_*}(s_t,\boldsymbol{\tau}_t^k)\geq Q^{\pi}(s_t,\boldsymbol{\tau}_t^k)$, $\forall s_t\in\mathcal{S}$, $\boldsymbol{\tau}_t^k\in\mathcal{A}^k$, and $\pi\in\Pi$.
\end{thm}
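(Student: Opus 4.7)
The plan is to follow the classical monotone-convergence proof of policy iteration convergence, with Lemmas 1 and 2 playing the roles of exact evaluation and monotone improvement at the $k$-step plan-value level instead of the single-action level.

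First, I would chain the two lemmas to produce a monotone bounded sequence of plan values. Lemma 1 gives the exact plan value $Q^{\pi_i}$ of the current policy $\pi_i$ (in the limit of evaluation), and Lemma 2 then yields $Q^{\pi_{i+1}}(s_t,\boldsymbol{\tau}_t^k)\ge Q^{\pi_i}(s_t,\boldsymbol{\tau}_t^k)$ for every $(s_t,\boldsymbol{\tau}_t^k)$. Under bounded rewards and $\gamma\in(0,1)$, every $Q^\pi$ is uniformly bounded by $r_{\max}/(1-\gamma)$, so the pointwise non-decreasing bounded sequence $\{Q^{\pi_i}(s_t,\boldsymbol{\tau}_t^k)\}$ converges to some limit $Q^{\infty}(s_t,\boldsymbol{\tau}_t^k)$. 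Compactness of the finite-dimensional product of simplices $\Delta(\mathcal{A})^{|\mathcal{S}|}$ in which $\Pi$ sits lets me extract a subsequence $\pi_{i_j}\to\pi_*$, and continuity of $Q^\pi$ in $\pi$ (inherited from the Banach fixed-point theorem applied to the $\gamma^k$-contraction $\mathcal{T}^\pi$) yields $Q^{\pi_*}=Q^\infty$.

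Second, I would argue that $\pi_*$ is a fixed point of the planning improvement operator against its own plan value and then upgrade this fixed-point property to global optimality. The first part follows from continuity of the right-hand side of Eq.~\eqref{policyimprovement} in its value argument. For the second part, if any $\pi'\in\Pi$ had strictly larger plan value than $\pi_*$ somewhere, then feeding $\pi'$ into Eq.~\eqref{policyimprovement} with $\pi_{\mathrm{old}}=\pi_*$ would produce a policy whose evaluation, again by Lemma 2, dominates $Q^{\pi_*}$ strictly, contradicting the fixed-point property. Since the planning Bellman optimality operator is a $\gamma^k$-contraction on a finite-dimensional space, its unique fixed point is $Q^*$, which must therefore equal $Q^{\pi_*}$; this gives the claimed inequality $Q^{\pi_*}(s_t,\boldsymbol{\tau}_t^k)\ge Q^{\pi}(s_t,\boldsymbol{\tau}_t^k)$ for all $\pi\in\Pi$.

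The main obstacle is the link between the two fixed-point notions in the second step. Because $\boldsymbol{\pi}^k(\boldsymbol{\tau}_t^k|s_t)=\prod_{m=0}^{k-1}\pi(a_{t+m}|s_{t+m})$ is a nested product of $k$ composed copies of the same $\pi$ evaluated at model-generated intermediate states, the greedification in Eq.~\eqref{policyimprovement} is over the single-step $\pi$ while the expectation simultaneously depends on $\pi$ through the rollout. One has to check carefully that this coupled optimization still enforces the ordinary per-state one-step Bellman optimality at every state visited by the $k$-step rollout under $\pi_*$, so that $Q^{\pi_*}$ can be identified with the unique fixed point of the planning Bellman optimality operator rather than with some merely locally optimal alternative.
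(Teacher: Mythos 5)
Your first step (monotone bounded values, compactness of the policy simplex, and continuity of $\pi\mapsto Q^\pi$ to extract a convergent subsequence) is sound, and is actually more careful than the paper, which simply asserts convergence of $\{\pi_i\}$ from monotonicity of $\{Q^{\pi_i}\}$. The genuine gap is in your second step, the upgrade from the fixed-point property to global optimality. Your contradiction argument is circular: from the existence of $\pi'$ with $Q^{\pi'}(s_0,\boldsymbol{\tau}_0^k)>Q^{\pi_*}(s_0,\boldsymbol{\tau}_0^k)$ you conclude that the improvement step \eqref{policyimprovement} applied to $\pi_{\mathrm{old}}=\pi_*$ must yield a \emph{strict} improvement, but Lemma \ref{lemma2} only gives a weak inequality, and the implication ``some policy beats $\pi_*$ somewhere $\Rightarrow$ $\max_{\pi}\sum_{\boldsymbol{\tau}}\boldsymbol{\pi}^k(\boldsymbol{\tau}|s)Q^{\pi_*}(s,\boldsymbol{\tau})>V^{\pi_*}(s)$ somewhere'' is exactly the contrapositive of the optimality statement you are trying to prove, so it cannot be invoked. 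Your fallback, that ``the unique fixed point of the planning Bellman optimality operator is $Q^*$,'' assumes the other fact that needs proof here: because the max in \eqref{policyimprovement} ranges over Markov policies whose induced plan distribution is a product of the \emph{same} $\pi$ at model-generated intermediate states (your own ``main obstacle''), the achievable plan distributions form a strict subset of all distributions on $\mathcal{A}^k$, so the textbook identification of that operator's fixed point with $\sup_\pi Q^\pi$ cannot be cited off the shelf.

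Both holes are closed by one standard argument, which is essentially what the paper does. The paper never introduces an optimality operator: from the fixed-point property it gets $V^{\pi_*}(s)\geq\sum_{\boldsymbol{\tau}}\boldsymbol{\pi}^k(\boldsymbol{\tau}|s)Q^{\pi_*}(s,\boldsymbol{\tau})$ for every competing $\pi$ and every $s$, then repeatedly expands $Q^{\pi_*}$ via the Bellman equation and applies this inequality at each $k$-step boundary; the remainder $\gamma^{nk}V^{\pi_*}$ vanishes under discounting and the accumulated rewards are exactly those generated by $\pi$, yielding $Q^{\pi_*}\geq Q^{\pi}$ pointwise. Equivalently, in your operator language: the fixed-point property gives $\mathcal{T}^{\pi}Q^{\pi_*}\leq\mathcal{T}^{\pi_*}Q^{\pi_*}=Q^{\pi_*}$ for every $\pi\in\Pi$; monotonicity of $\mathcal{T}^{\pi}$ and induction give $(\mathcal{T}^{\pi})^nQ^{\pi_*}\leq Q^{\pi_*}$ for all $n$, and Lemma \ref{lemma1} gives $(\mathcal{T}^{\pi})^nQ^{\pi_*}\to Q^{\pi}$, hence $Q^{\pi}\leq Q^{\pi_*}$. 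Substituting this monotonicity-plus-limit argument for your contradiction step turns your proposal into a valid (and, in the policy-convergence step, more rigorous) variant of the paper's proof. Finally, the reduction you worry about at the end---recovering per-state one-step Bellman optimality along the rollout---is not needed: the theorem only asserts dominance in $k$-step plan value over $\Pi$, not one-step optimality of $\pi_*$ in the underlying MDP.
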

\begin{proof}
See Appendix \ref{apdx_thm1}.
\end{proof}

We remark that our planning policy iteration can be extended to soft form with maximum entropy, referring to SAC \cite{sac}, see Appendix \ref{softppi} for more details.

\subsection{Model-based Planning Policy Learning with Multi-step Plan Value Estimation}
The tabular planning policy iteration cannot be directly applied to scenarios with inaccessible dynamics functions and continuous environmental state-action space. Therefore, we propose a neuron-based algorithm based on planning policy iteration for general application, called \textbf{M}odel-based Planning \textbf{P}olicy Learning with Multi-step \textbf{P}lan \textbf{V}alue \textbf{E}stimation (\textbf{MPPVE}). We only introduce MPPVE with vanilla plan value in this section, while the soft extension of MPPVE with maximum entropy is given in Appendix \ref{softMPPVE}.

As shown in Figure \ref{fig0(c)}, our MPPVE adopts the framework of model-based actor-critic, which is divided into dynamics model $p_\theta$, actor $\pi_\phi$ and critic $Q_\psi$, where $\theta$, $\phi$ and $\psi$ are neural parameters. The algorithm will be described in three parts: 1) model learning; 2) multi-step plan value estimation; 3) model-based planning policy improvement.

\subsubsection{Model Learning} Like MBPO \cite{mbpo}, our dynamics model is an ensemble neural network that takes state-action pair as input and outputs Gaussian distribution of the next state and reward. That is, $p_\theta(s_{t+1},r_t|s_t,a_t)=\mathcal{N}(\mu_\theta(s_t,a_t),\Sigma_\theta(s_t,a_t))$. The dynamics model is trained to maximize the expected likelihood:
\begin{equation}
\label{pobject}
J_p(\theta)=\mathbb{E}_{(s_t,a_t,r_t,s_{t+1})\sim \mathcal{D}_{\mathrm{env}}}\left[\log p_\theta(s_{t+1},r_t|s_t,a_t)\right].
\end{equation}

\begin{algorithm}[t!]
\caption{MPPVE}
\label{MPPVE}
\textbf{Input}: Initial neural parameters $\theta$, $\phi$, $\psi$, $\psi^-$, plan length $k$, environment buffer $\mathcal{D}_{\mathrm{env}}$, model buffer $\mathcal{D}_{\mathrm{model}}$, start size $U$, batch size $B$, and learning rate $\lambda_Q$, $\lambda_\pi$.

\begin{algorithmic}[1] 
\STATE Explore in the environment for $U$ steps and add data to $\mathcal{D}_{\mathrm{env}}$
\FOR{$N$ epochs}
\STATE Train model $p_\theta$ on $\mathcal{D}_{\mathrm{env}}$ by maximizing Eq.\eqref{pobject}
\FOR{$E$ steps}
\STATE Sample action to perform in the environment according to $\pi_\phi$; add the environmental transition to $\mathcal{D}_{\mathrm{env}}$
\FOR{$M$ model rollouts}
\STATE Sample $s_t$ from $\mathcal{D}_{\mathrm{env}}$ to make model rollout using policy $\pi_\phi$; add generated samples to $\mathcal{D}_{\mathrm{model}}$
\ENDFOR
\FOR{$G$ critic updates}
\STATE Sample $B$ $k$-step trajectories from $\mathcal{D}_{\mathrm{env}}\cup\mathcal{D}_{\mathrm{model}}$ to update critic $Q_\psi$ via $\psi\leftarrow\psi-\lambda_Q\hat{\nabla}_\psi J_Q(\psi)$ by Eq.\eqref{Qgrad}
\STATE Update target critic via $\psi^- \leftarrow \tau\psi+(1-\tau)\psi^-$
\ENDFOR
\STATE Sample $B$ states from $\mathcal{D}_{\mathrm{env}}$ to update policy $\pi_\phi$ via $\phi \leftarrow \phi- \lambda_\pi\hat{\nabla}_\phi J_{\boldsymbol{\pi}^k}(\phi)$ by Eq.\eqref{mspg}
\ENDFOR
\ENDFOR
\end{algorithmic}
\end{algorithm}

\begin{figure*}[t!]
\centering
\includegraphics[width=0.94\linewidth]{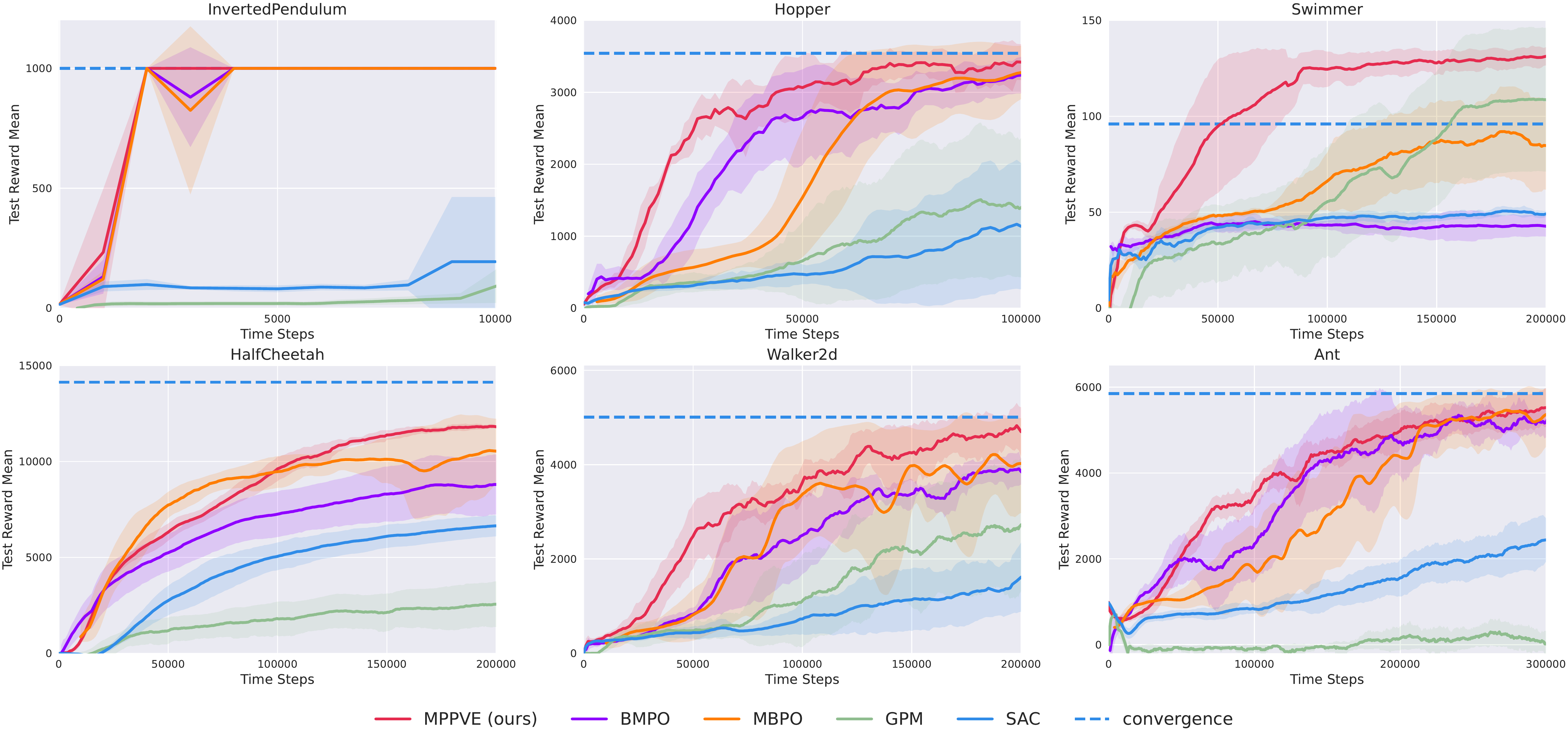}
\caption{Learning curves of our MPPVE (red) and other four baselines on MuJoCo continuous control tasks. The blue dashed lines indicate the asymptotic performance of SAC in these tasks for reference. The solid lines indicate the mean, and shaded areas indicate the standard error over eight different random seeds. Each evaluation, taken every 1,000 environmental steps, calculates the average return over ten episodes.}
\label{fig3}
\end{figure*}

\subsubsection{Multi-step Plan Value Estimation} In order to approximate $k$-step plan value function with continuous inputs, we use a deep Q-network: $Q_\psi(s_t,\boldsymbol{\tau}^k_t)$ with parameters $\psi$. Plan value is estimated by minimizing the expected multi-step TD error:
\begin{equation}
\label{Qobject}
J_Q(\psi) = \mathbb{E}_{(s_t,\boldsymbol{\tau}^k_t,\boldsymbol{r}^k_t,s_{t+k})\sim \mathcal{D}_{\mathrm{env}}\cup \mathcal{D}_{\mathrm{model}}}\left[l_{\mathrm{TD}}(s_t,\boldsymbol{\tau}^k_t,\boldsymbol{r}^k_t,s_{t+k})\right],
\end{equation}
with
\begin{equation}
\boldsymbol{r}^k_t = (r_t,r_{t+1},...,r_{t+k-1}),
\end{equation}
\begin{equation}
l_{\mathrm{TD}}(s_t,\boldsymbol{\tau}^k_t,\boldsymbol{r}^k_t,s_{t+k})=\frac{1}{2}\left(Q_\psi(s_t,\boldsymbol{\tau}^k_t)
-y_{\psi^-}(\boldsymbol{r}^k_t,s_{t+k})\right)^2,
\end{equation}
\begin{equation}
y_{\psi^-}(\boldsymbol{r}^k_t,s_{t+k})=\sum_{m=0}^{k-1}\gamma^m r_{t+m}+\gamma^k \mathbb{E}_{\hat{\boldsymbol{\tau}}^k}\left[Q_{\psi^-}(s_{t+k},\hat{\boldsymbol{\tau}}^k)\right],
\end{equation}
where $\psi^-$ is the parameters of the target network, which is used for stabilizing training \cite{dqn}. The gradient of Eq.\eqref{Qobject} can be estimated without bias by
\begin{equation}
\label{Qgrad}
\hat{\nabla}_\psi J_Q(\psi)=
\left(Q_\psi(s_t,\boldsymbol{\tau}^k_t)-\hat{y}_{\psi^-}(\boldsymbol{r}^k_t,s_{t+k})\right) \nabla_\psi Q_\psi(s_t,\boldsymbol{\tau}^k_t),
\end{equation}
with
\begin{equation}
\hat{y}_{\psi^-}(\boldsymbol{r}^k_t,s_{t+k})=\sum_{m=0}^{k-1}\gamma^m r_{t+m}+\gamma^k Q_{\psi^-}(s_{t+k},\boldsymbol{\tau}_{t+k}^k),
\end{equation}
where $(s_t,\boldsymbol{\tau}^k_t,\boldsymbol{r}^k_t,s_{t+k})$ is sampled from the replay buffer and $\boldsymbol{\tau}_{t+k}^k$ is sampled according to current planning policy.

\subsubsection{Model-based Planning Policy Improvement}
As shown in Eq.\eqref{pp1}-Eq.\eqref{pp4}, a $k$-step model-based planning policy $\boldsymbol{\pi}^k_{\phi,\theta}$ is composed of dynamics model $p_\theta$ and policy $\pi_\phi$. Since the plan value function is represented by a differentiable neural network, we can train model-based planning policy by minimizing
\begin{equation}
    J_{\boldsymbol{\pi}^k}(\phi)=\mathbb{E}_{s_t\sim\mathcal{D}_{\mathrm{env}}}\left[\mathbb{E}_{\boldsymbol{\tau}^k_t\sim\boldsymbol{\pi}^k_{\phi,\theta}(\cdot|s_t)}
    \left[-Q_\psi(s_t,\boldsymbol{\tau}^k_t)\right]\right],
\end{equation}
whose gradient can be approximated by
\begin{equation}
\label{mspg}
\hat{\nabla}_\phi J_{\boldsymbol{\pi}^k}(\phi)=\left(-\nabla_{\boldsymbol{\tau}^k_t}Q_\psi(s_t,\boldsymbol{\tau}^k_t)\right) \nabla_\phi \boldsymbol{f}_{\phi,\theta}(s_t,\boldsymbol{\eta^k_t}),
\end{equation}
where $\boldsymbol{f}_{\phi,\theta}(s_t,\boldsymbol{\eta^k_t})=\boldsymbol{\tau}^k_t$ is the reparameterized neural network transformation of planning policy with $\boldsymbol{\eta}_t^k=(\eta_1,\eta_2,...,\eta_k)$, which is a $k$-step noise vector sampled from Gaussian distribution.

During the model-based planning policy improvement step, the actor makes $k$-step plans starting from $s_t$ using policy $\pi_\phi$ and model $p_\theta$. Then, the corresponding state-plan pairs are fed into the neural plan value function to directly compute the gradient \eqref{mspg}. Therefore, the policy gradients within these $k$ steps are mainly affected by the bias of plan value estimation at $s_t$, avoiding the influence of action-value estimation with compounding error at the following $k-1$ states in previous model-based methods.

~\\
The complete algorithm is described in Algorithm \ref{MPPVE}, and the hyper-parameter settings are given in Appendix \ref{hyper}. We also utilize model rollouts to generate fake transitions, as proposed by MBPO \cite{mbpo}. The primary difference from MBPO is that only the critic is trained with additional model data for supporting larger UTD in our algorithm, while the actor is trained merely with real-world data at a lower update frequency than the critic. There are two reasons for the modification: 1) plan value is more difficult to estimate than action-value, then the critic can guide the actor only after learning with sufficient and diverse samples; 2) model data is not required for the actor since a $k$-step model rollout exists naturally in our model-based planning policy. 

\section{Experiments}
In this section, we focus on three primary questions: 1) How well does our method perform on benchmark tasks of reinforcement learning, in contrast to a broader range of state-of-the-art model-free and model-based RL methods? 2) Does our proposed model-based planning policy improvement based on plan value estimation provide more accurate policy gradients than previous model-based RL methods? 3) How the plan length $k$ affects our method?

\begin{figure*}[pt!]
    \centering
    \subfigure[]{
        \centering
        \includegraphics[width=0.31\linewidth]{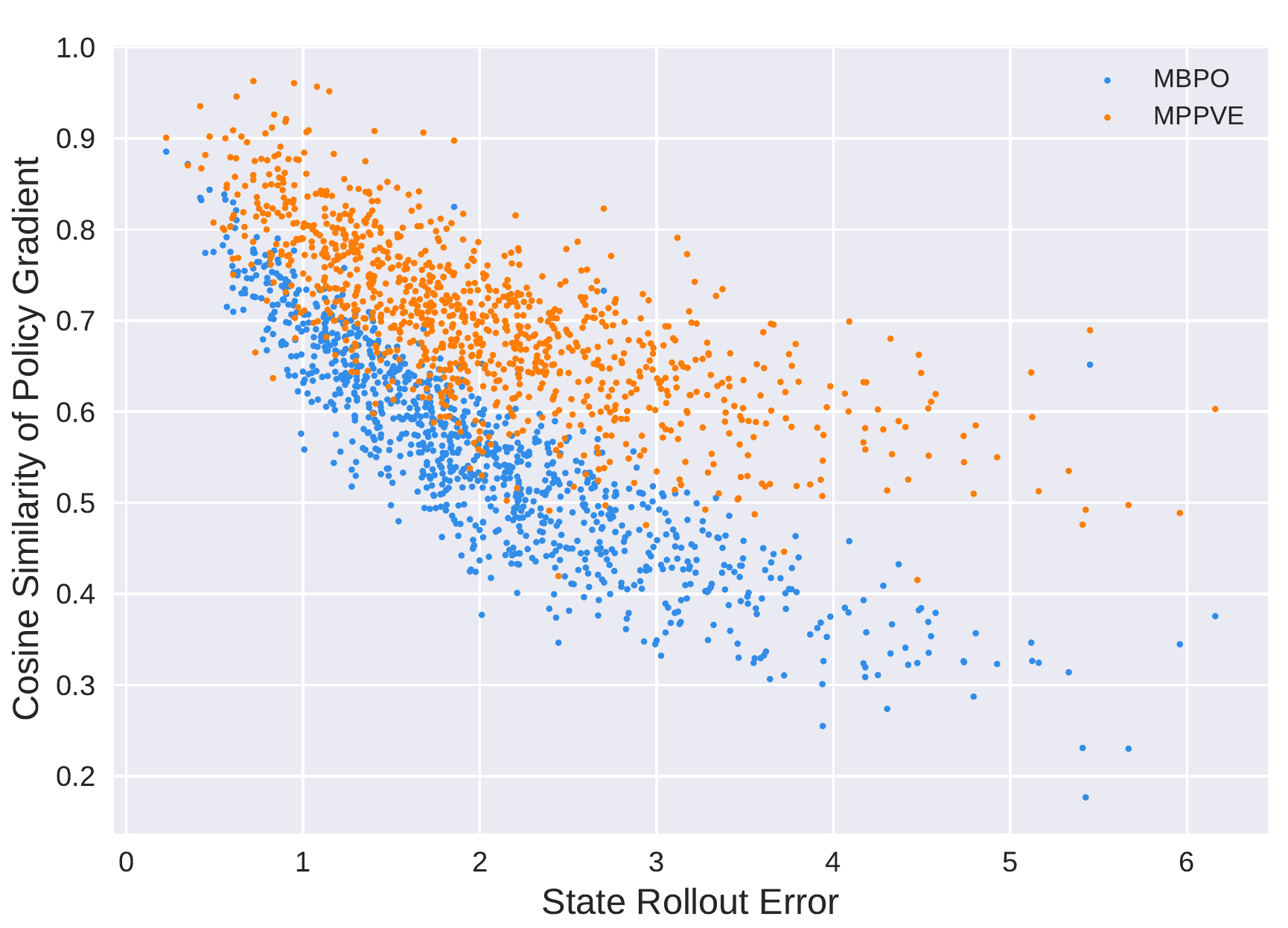}
        \label{fig4(a)}
    }%
    \subfigure[]{
        \centering
        \includegraphics[width=0.31\linewidth]{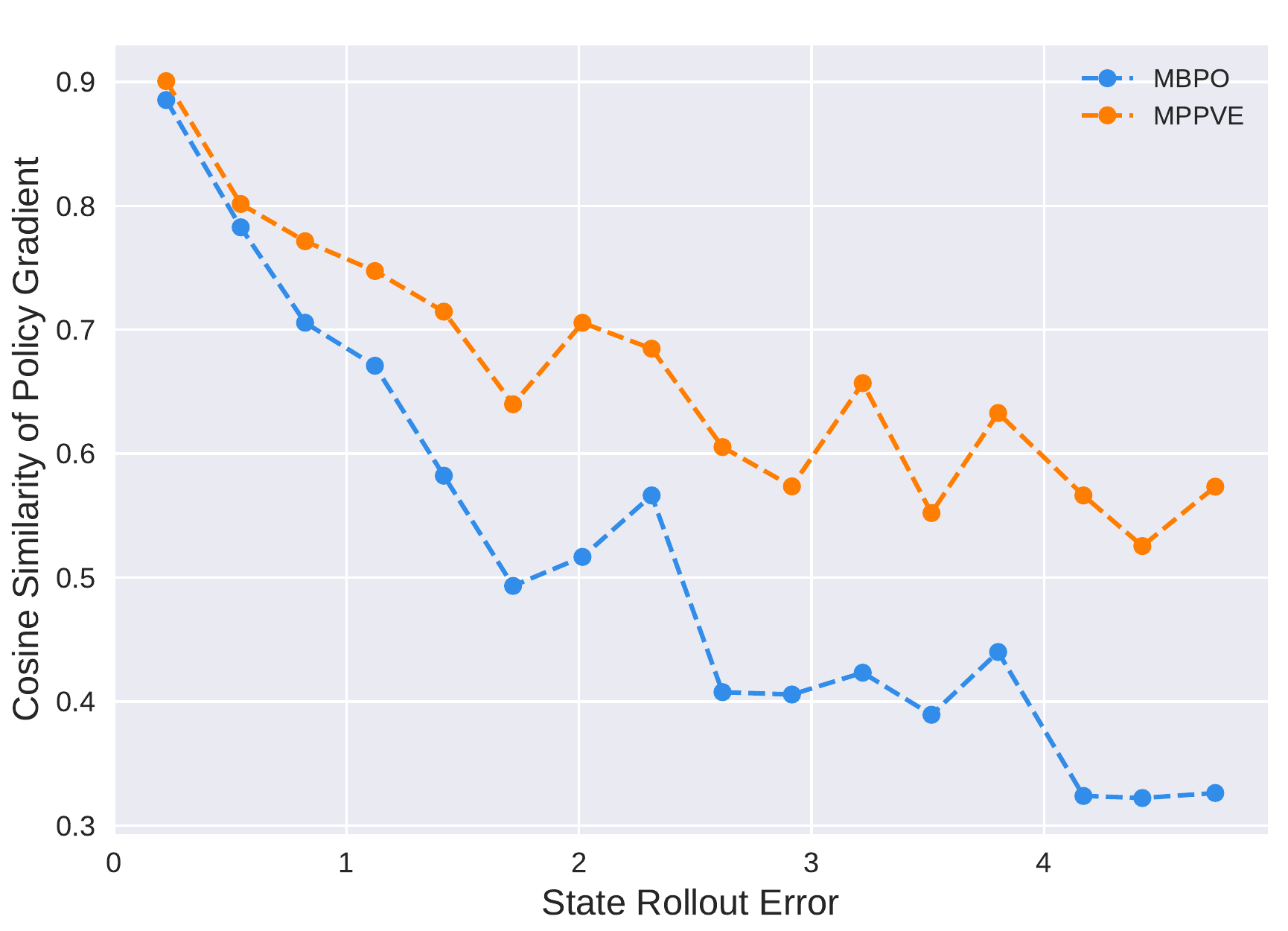}
        \label{fig4(b)}
    }%
    \subfigure[]{
        \centering
        \includegraphics[width=0.31\linewidth]{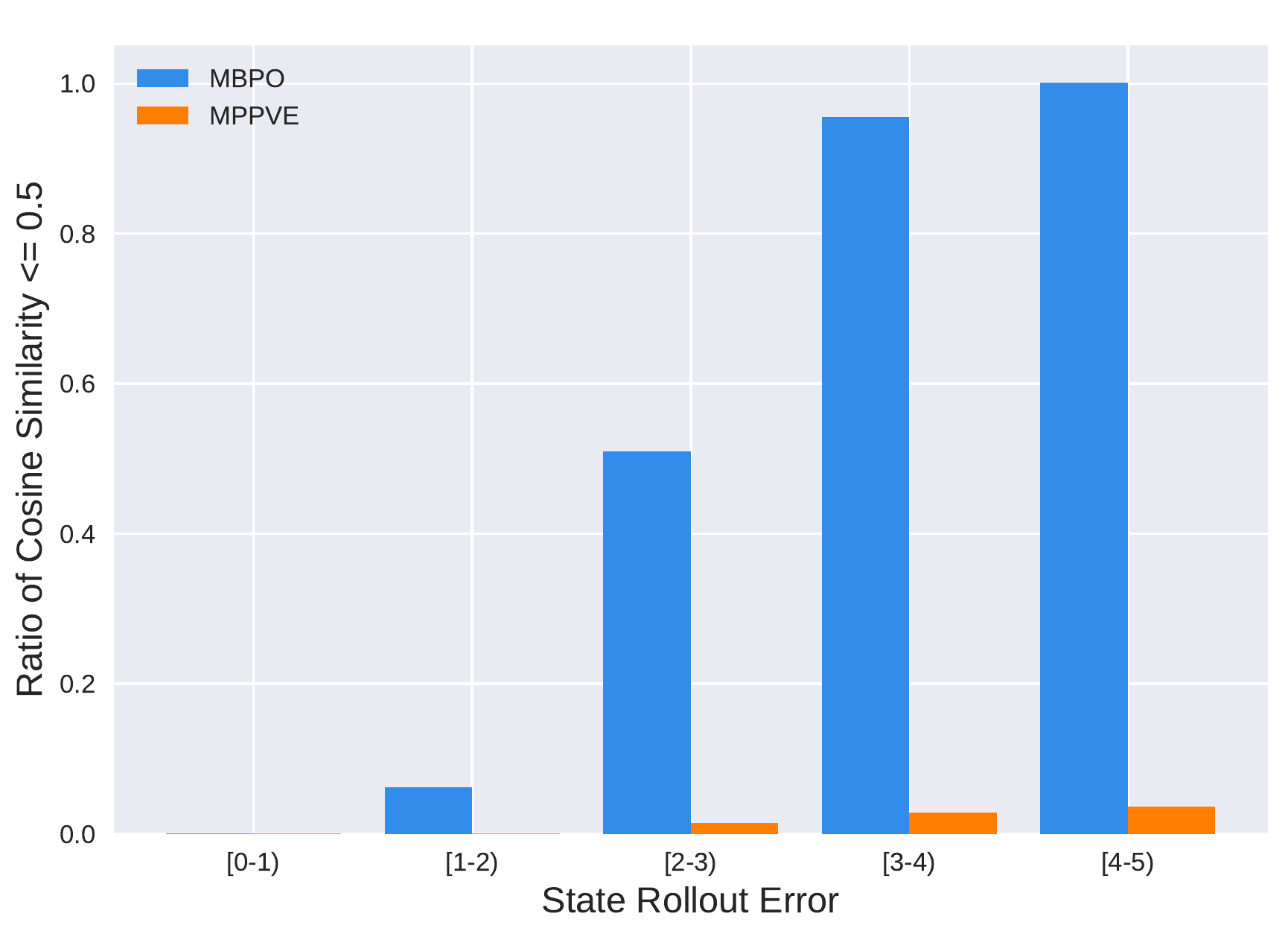}
        \label{fig4(c)}
    }%
    \centering
    \caption{We compare the influence of model error on the directions of $k$-step ($k=3$) policy gradients given by MPPVE and MBPO on the HalfCheetah task. We make real branched rollouts in the real world and utilize the learned model to generate the fake, starting from some real states. (a) For each fake rollout and its corresponding real rollout, we show their deviation along with the normalized cosine similarity between their $k$-step policy gradients. (b) We select some samples and plot further, where each group of orange and blue points on the same X-coordinate corresponds to the same starting real state. (c) We count the ratio of points with severely inaccurate $k$-step policy gradients for each interval of state rollout error.}
    \label{fig4}
\end{figure*}

\subsection{Comparison in RL Benchmarks}
We evaluate our method MPPVE on 6 MuJoCo continuous control tasks \cite{mujoco}, including InvertedPendulum, Hopper, Swimmer, HalfCheetah, Walker2d, and Ant. Two model-free methods and two model-based methods are selected as our baselines. For model-free methods, we compare MPPVE with SAC \cite{sac}, the state-of-the-art model-free algorithm, and GPM \cite{gpm}, which also proposes the concept of plan value and features an inherent mechanism for generating temporally coordinated plans for exploration. For model-based methods, we compare MPPVE with MBPO \cite{mbpo}, the most representative model-based method so far, and BMPO \cite{bmpo}, which proposes a bidirectional dynamics model to generate model data with less compounding error than MBPO.

Figure \ref{fig3} shows the learning curves of all approaches, along with the asymptotic performance of SAC. MPPVE achieves great performance after fewer environmental samples than baselines. Take Hopper as an example, MPPVE has achieved 80\% performance (about 2700) after 30k steps, while the other two model-based methods, BMPO and MBPO, need about 45k and 60k steps respectively, and model-free methods, both SAC and GPM, can achieve only about 1000 after 100k steps. MPPVE performs 1.5x faster than BMPO, 2x faster than MBPO, and dominates SAC and GPM, in terms of learning speed on the Hopper task. After training, MPPVE can achieve a final performance close to the asymptotic performance of SAC on all these six MuJoCo benchmarks. These results show that MPPVE has both high sample efficiency and competitive performance. 

\begin{figure*}[t!]
    \centering
    \subfigure[Performance]{
        \centering
        \includegraphics[width=0.24\linewidth]{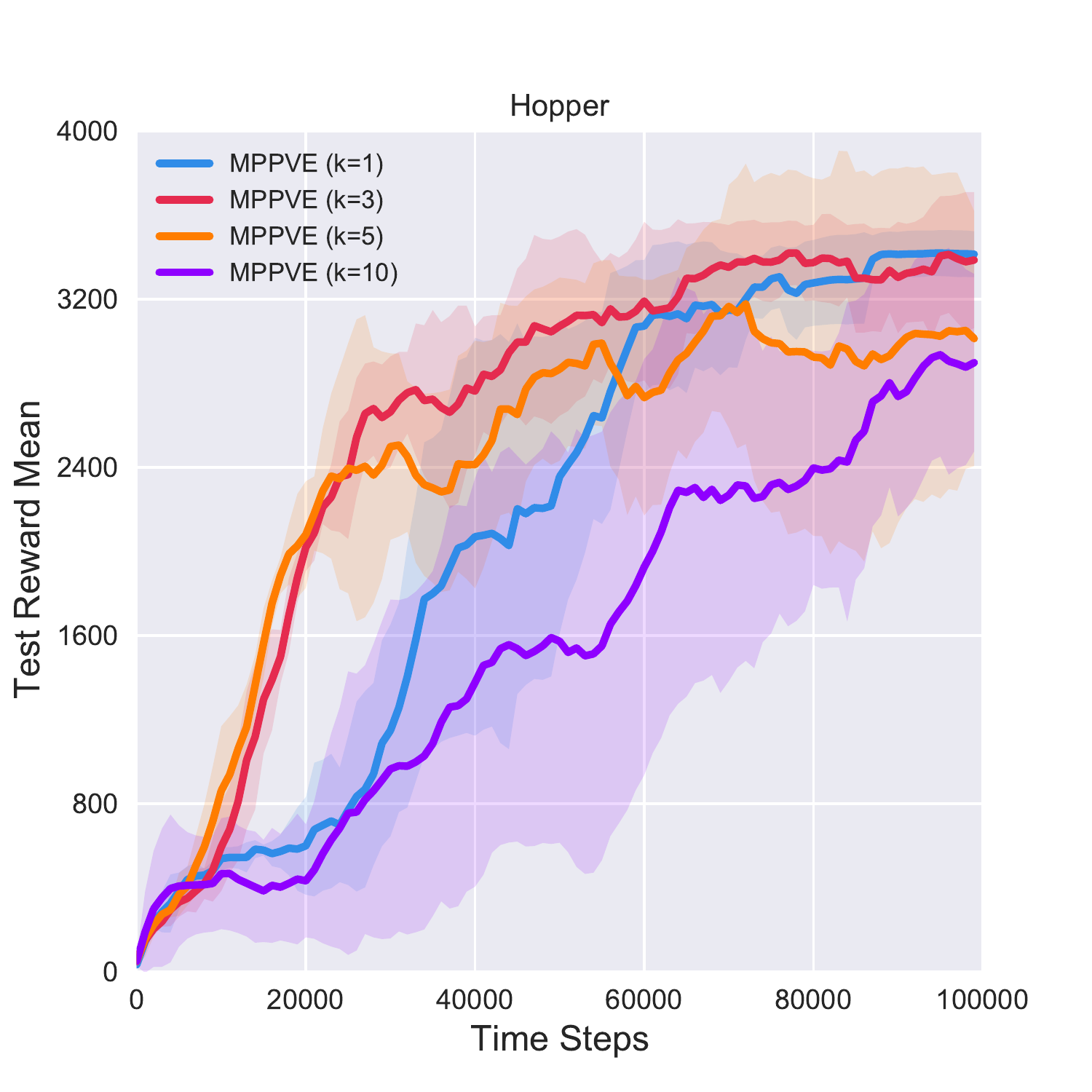}
        \label{fig5(a)}
    }%
    \subfigure[Policy evaluation]{
        \centering
        \includegraphics[width=0.24\linewidth]{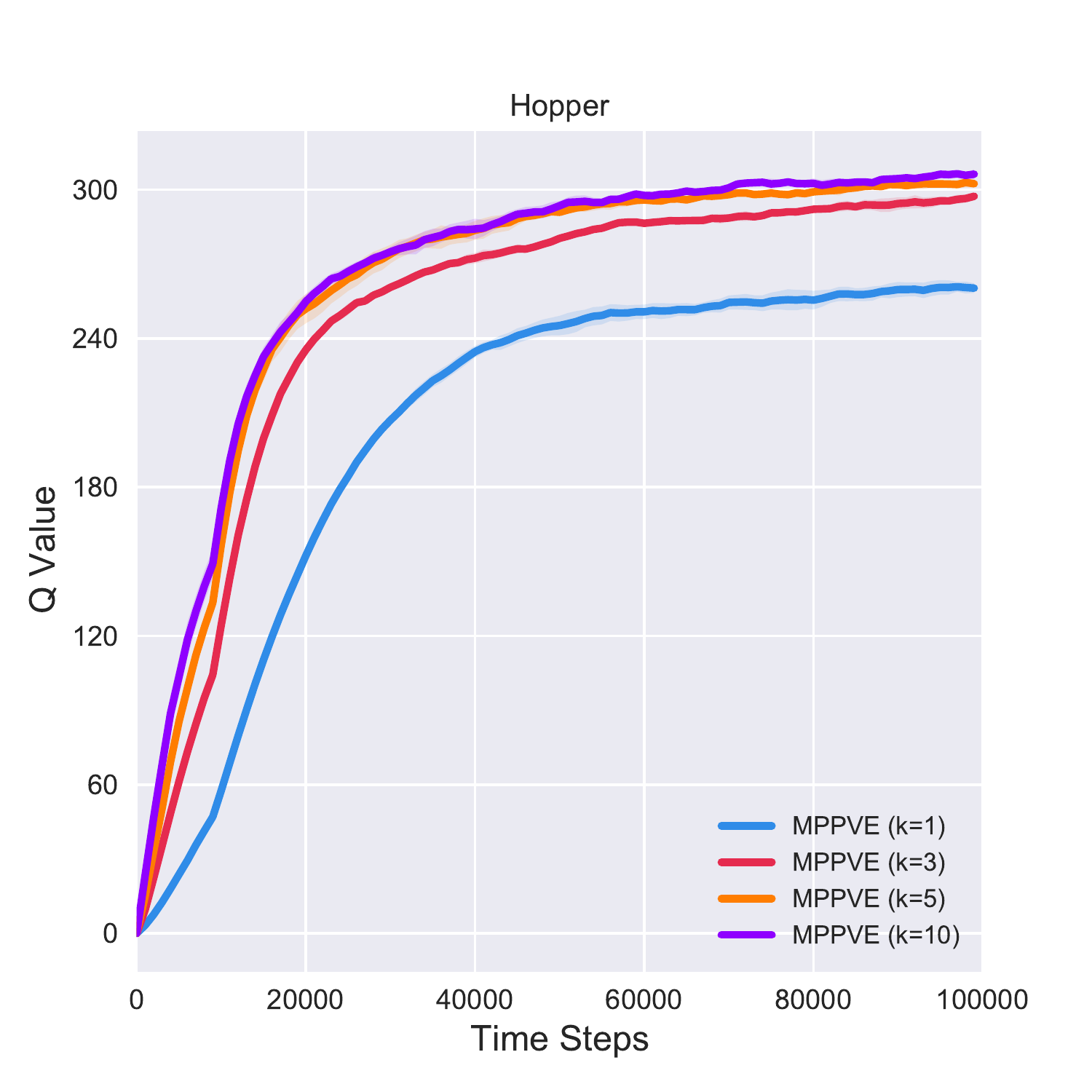}
        \label{fig5(b)}
    }%
    \subfigure[Average of value bias]{
        \centering
        \includegraphics[width=0.24\linewidth]{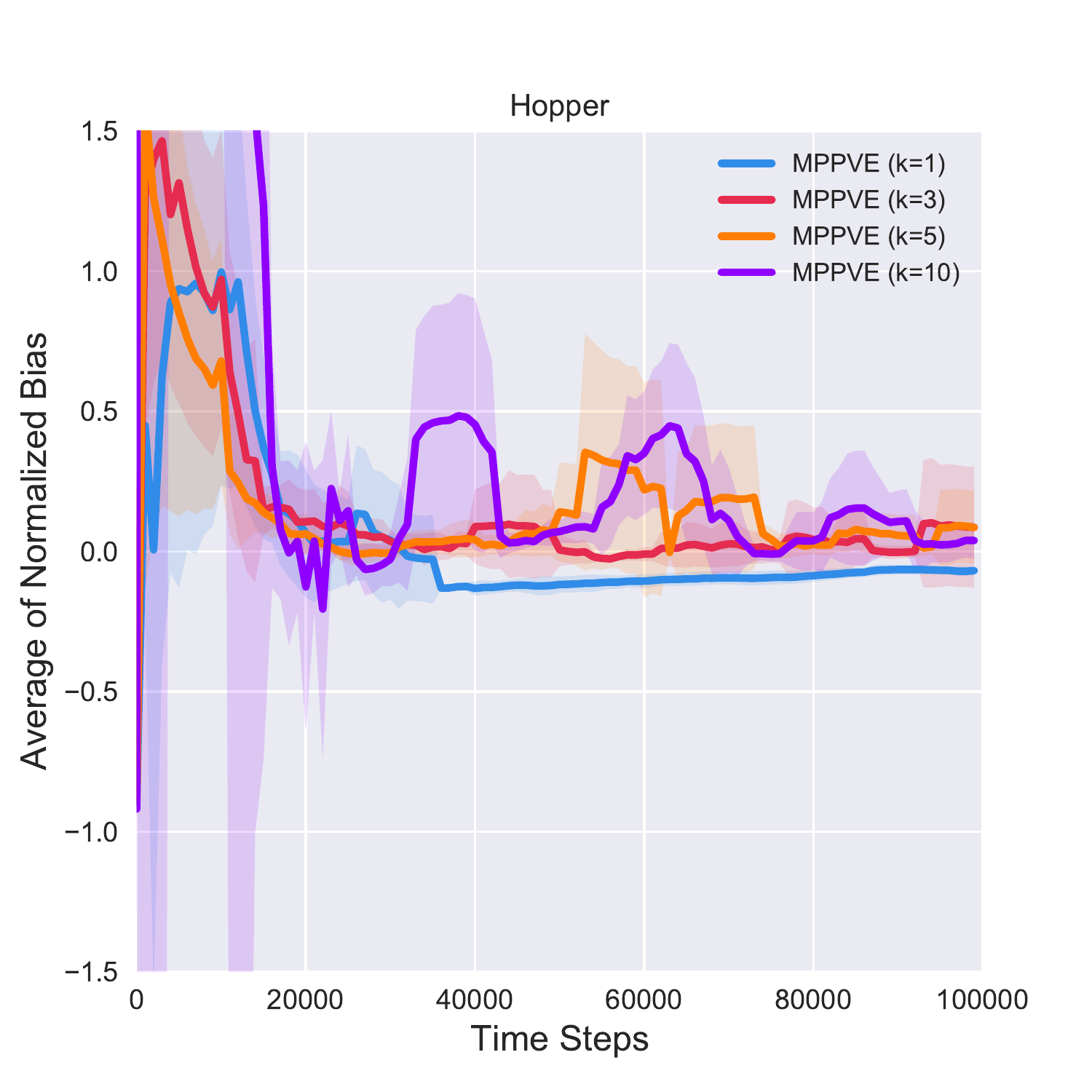}
        \label{fig5(c)}
    }%
    \subfigure[Std of value bias]{
        \centering
        \includegraphics[width=0.24\linewidth]{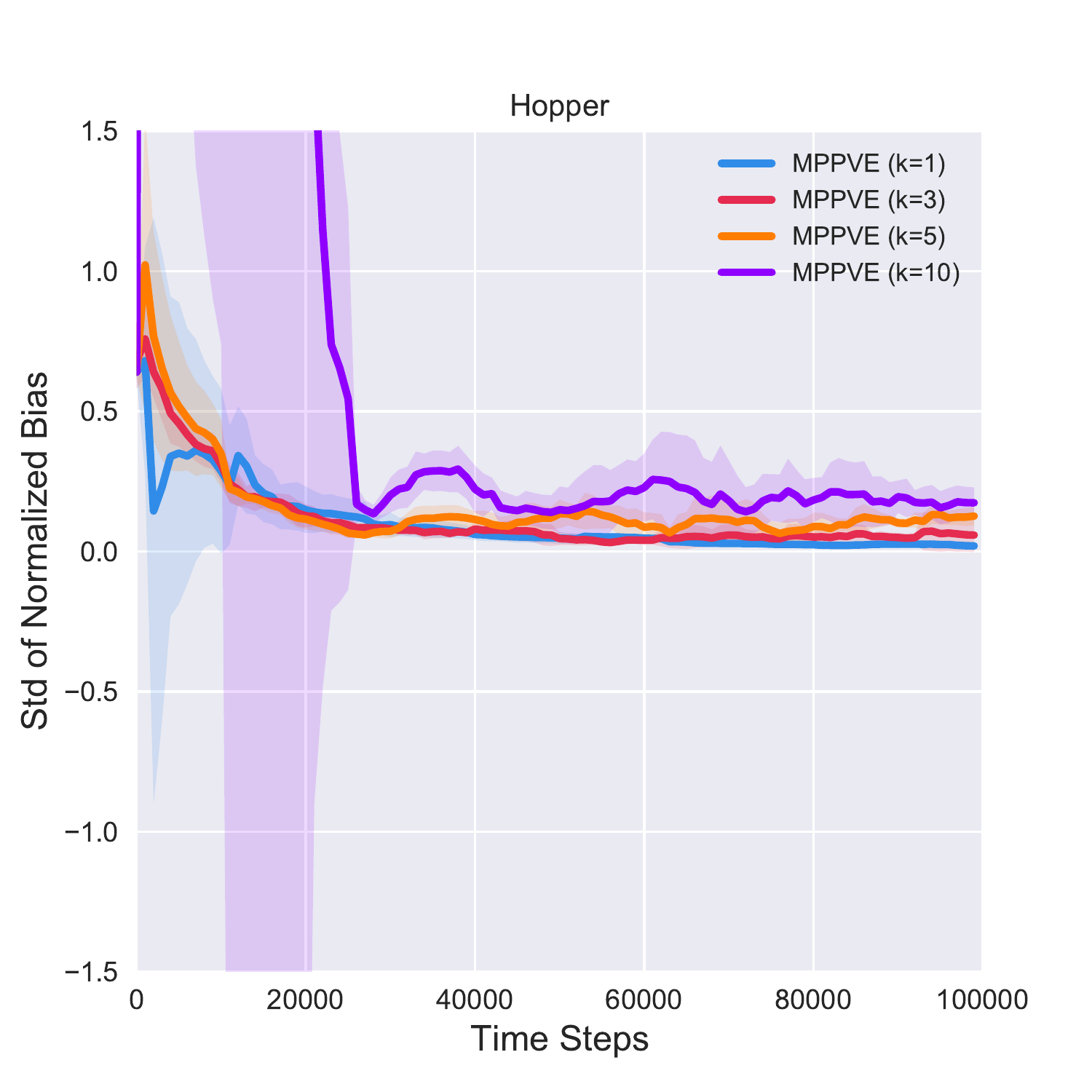}
        \label{fig5(d)}
    }%
    \centering
    \caption{Study of plan length $k$ on Hopper-v3 task, through comparison of MPPVE with $k=1$, $k=3$, $k=5$ and $k=10$ respectively. (a) Evaluation of episodic reward during the learning process. (b) Learning plan value estimation for the same fixed policy. (c) Average of the bias between neural value estimation and true value evaluation over state-action(s) space. (d) Standard error of the bias between neural value estimation and true value evaluation over state-action(s) space.}
    \label{fig5}
\end{figure*}

\subsection{Study on Policy Gradients}
We conduct a study to verify that multi-step policy gradients computed by our MPPVE are less influenced by model error and more accurate than those computed by previous model-based RL methods, as we claimed. Without loss of rigor, we only choose MBPO for comparison since many other model-based methods follow the same way of computing the policy gradients as MBPO.

In order to exclude irrelevant factors as far as possible, we fix the policy and the learned dynamics model after enough environmental samples, then learn the multi-step plan value function and action-value function, respectively, until they are both converged. Next, we measure the influence of model error on the directions of policy gradients computed by MPPVE and MBPO, respectively. Specifically, we sample some real environmental states from the replay buffer and start from them to make $k$-step real rollouts with perfect oracle dynamics and generate $k$-step imaginary rollouts with the learned model. Both MPPVE and MBPO can compute $k$-step policy gradients on the generated fake rollouts and their corresponding real rollouts. MPPVE utilizes $k$-step plan value estimation on the first states of the rollouts to compute $k$-step policy gradients directly, while MBPO needs to compute the single-step policy gradients with action-value estimation on all states of the rollouts and averages over the $k$ steps. For each fake rollout and its corresponding real rollout, we can measure their deviation along with the normalized cosine similarity between their $k$-step policy gradients given by MBPO or MPPVE.

Figure \ref{fig4(a)} shows the influence of model error on the directions of $k$-step ($k=3$) policy gradients on the HalfCheetah task, where orange stands for MPPVE while blue stands for MBPO. Each point corresponds to one starting real state for making $k$-step rollouts, and its X-coordinate is the error of the fake rollout from the real rollout, while its Y-coordinate is the normalized cosine similarity between $k$-step policy gradients computed on the real rollout and the fake rollout. On the one hand, it can be observed that the point with less state rollout error tends to have a smaller normalized cosine similarity of policy gradients, both for MPPVE and MBPO. The tendency reveals the influence of model error on the policy gradients. On the other hand, the figure demonstrates that the orange points are almost overall above the blue points, which means that the $k$-step policy gradients computed by our MPPVE are less influenced by model error.

For the sake of clear comparison, we select only about a dozen real environmental states and plot further. Figure \ref{fig4(b)} shows the result, where each group of orange and blue points on the same X-coordinate corresponds to the same starting real state for rollout. We connect the points of the same color to form a line. The orange line is always above the blue line, indicating the advantage of MPPVE in computing policy gradients.

Furthermore, we count the ratio of points with normalized cosine similarity between $k$-step policy gradients computed on real and fake rollouts smaller than 0.5 for each interval of state rollout error, as shown in Figure \ref{fig4(c)}. A normalized cosine similarity smaller than 0.5 means that the direction of the $k$-step policy gradient deviates by more than 90 degrees and is severely inaccurate. As for MBPO, the ratio of severely biased policy gradients increases as the state rollout error increases and reaches approximately 1 when the state rollout error is great enough. By contrast, our MPPVE only provides severely inaccurate policy gradients with a tiny ratio, even under a great state rollout error.

In summary, we can conclude that by directly computing multi-step policy gradients via plan value estimation, the key idea of our MPPVE provides policy gradients more accurately and effectively than MBPO, whose computation of policy gradients is also adopted by other previous state-of-the-art model-based RL methods.

\subsection{Study on Plan Length}

The previous section empirically shows the advantage of the model-based planning policy improvement. Furthermore, a natural question is what plan length $k$ is appropriate. Intuitively, a larger length allows the policy to be updated more times, which may improve the sample efficiency. Nevertheless, it also makes learning the plan value function more difficult. We next make ablations to our method to better understand the effect of plan length $k$.

Figure \ref{fig5} presents an empirical study of this hyper-parameter. The performance increases first and then decreases as plan length $k$ increases. Specifically, $k=3$ and $k=5$ learn fastest at the beginning of training followed by $k=1$ and $k=10$, while $k=1$ and $k=3$ are more stable than $k=5$ and $k=10$ in the subsequent training phase and also have better performance. Figure \ref{fig5(b)}, \ref{fig5(c)} and \ref{fig5(d)} explain why they perform differently. We plot their policy evaluation curves for the same fixed policy in Figure \ref{fig5(b)}. It indicates that a larger $k$ can make learning the plan value function faster. We then quantitatively analyze their estimation quality in Figure \ref{fig5(c)} and \ref{fig5(d)}. Specifically, we define the normalized bias of the $k$-step plan value estimation $Q_{\psi}(s_t, \boldsymbol{\tau}_t^k)$ to be $(Q_{\psi}(s_t, \boldsymbol{\tau}_t^k) - Q^{\pi}(s_t, \boldsymbol{\tau}_t^k))/|E_{s\sim \rho^\pi}[E_{\boldsymbol{\tau}^k \sim \boldsymbol{\pi}^k(\cdot|s)} [Q^{\pi} (s, \boldsymbol{\tau}^k)]]|$, where actual plan value $ Q^{\pi}(s_t, \boldsymbol{\tau}_t^k)$ is obtained by Monte Carlo sampling in the real environment. Strikingly, compared to the other two settings, $k=5$ and $k=10$ have quite high normalized average and std of bias during training, indicating the value function with too large plan length is hard to be fitted and then causes fluctuations in policy learning. In conclusion, $k=3$ achieves a trade-off between stability and learning speed of plan value estimation, so it performs best.

\section{Conclusion}
In this work, we propose a novel model-based reinforcement method, namely \textbf{M}odel-based Planning \textbf{P}olicy Learning with Multi-step \textbf{P}lan \textbf{V}alue \textbf{E}stimation (MPPVE). The algorithm is from the tabular planning policy iteration, whose theoretical derivation shows that any initial policy can converge to the optimal policy when applied in tabular settings. For general continuous settings, we empirically show that directly computing multi-step policy gradients via plan value estimation when the policy plans start from real states, the key idea of MPPVE, is less influenced by model error and provides more accurate policy gradients than previous model-based RL methods. Experimental results demonstrate that MPPVE achieves better sample efficiency than previous state-of-the-art model-free and model-based methods while retaining competitive performance on several continuous control tasks. In the future, we will explore the scalability of MPPVE and study how to estimate plan value stably when plan length is large to improve sample efficiency further.


\bibliography{aaai23}

\begin{thebibliography}{10}

\bibitem{msmbrl}
K.~Asadi, E.~Cater, D.~Misra, and M.~L. Littman.
\newblock Towards a simple approach to multi-step model-based reinforcement
  learning.
\newblock {\em CoRR}, abs/1811.00128, 2018.

\bibitem{m3}
K.~Asadi, D.~Misra, S.~Kim, and M.~L. Littman.
\newblock Combating the compounding-error problem with a multi-step model.
\newblock {\em CoRR}, abs/1905.13320, 2019.

\bibitem{msrl}
K.~D. Asis, J.~F. Hernandez{-}Garcia, G.~Z. Holland, and R.~S. Sutton.
\newblock Multi-step reinforcement learning: {A} unifying algorithm.
\newblock In {\em Proceedings of the 32nd {AAAI} Conference on Artificial
  Intelligence (AAAI'18)}, New Orleans, Louisiana, 2018.

\bibitem{cem}
Z.~I. Botev, D.~P. Kroese, R.~Y. Rubinstein, and P.~L’Ecuyer.
\newblock Chapter 3 - the cross-entropy method for optimization.
\newblock In {\em Handbook of Statistics}, volume~31 of {\em Handbook of
  Statistics}, pages 35--59. Elsevier, 2013.

\bibitem{mpc}
E.~Camacho and C.~Alba.
\newblock {\em Model Predictive Control}.
\newblock Advanced Textbooks in Control and Signal Processing. Springer London,
  2013.

\bibitem{che2018combining}
T.~Che, Y.~Lu, G.~Tucker, S.~Bhupatiraju, S.~Gu, S.~Levine, and Y.~Bengio.
\newblock Combining model-based and model-free {RL} via multi-step control
  variates.
\newblock \url{https://openreview.net}, 2018.

\bibitem{chen2022arxiv}
X.-H. Chen, Y.~Yu, Z.-M. Zhu, Z.~Yu, Z.~Chen, C.~Wang, Y.~Wu, H.~Wu, R.-J. Qin,
  R.~Ding, and F.~Huang.
\newblock Adversarial counterfactual environment model learning.
\newblock {\em arXiv preprint arXiv:2206.04890}, 2022.

\bibitem{pets}
K.~Chua, R.~Calandra, R.~McAllister, and S.~Levine.
\newblock Deep reinforcement learning in a handful of trials using
  probabilistic dynamics models.
\newblock In {\em Advances in Neural Information Processing Systems 31
  (NeurIPS'18)}, Montr{\'{e}}al, Canada, 2018.

\bibitem{maac}
I.~Clavera, Y.~Fu, and P.~Abbeel.
\newblock Model-augmented actor-critic: Backpropagating through paths.
\newblock In {\em 8th International Conference on Learning Representations
  (ICLR'20)}, Addis Ababa, Ethiopia, 2020.

\bibitem{mve}
V.~Feinberg, A.~Wan, I.~Stoica, M.~I. Jordan, J.~E. Gonzalez, and S.~Levine.
\newblock Model-based value estimation for efficient model-free reinforcement
  learning.
\newblock {\em CoRR}, abs/1803.00101, 2018.

\bibitem{sac}
T.~Haarnoja, A.~Zhou, P.~Abbeel, and S.~Levine.
\newblock Soft actor-critic: Off-policy maximum entropy deep reinforcement
  learning with a stochastic actor.
\newblock In {\em Proceedings of the 35th International Conference on Machine
  Learning (ICML'18)}, Stockholm, Sweden, 2018.

\bibitem{mbpo}
M.~Janner, J.~Fu, M.~Zhang, and S.~Levine.
\newblock When to trust your model: Model-based policy optimization.
\newblock In {\em Advances in Neural Information Processing Systems 32
  (NeurIPS'19)}, Vancouver, Canada, 2019.

\bibitem{dan}
P.~Karkus, X.~Ma, D.~Hsu, L.~P. Kaelbling, W.~S. Lee, and
  T.~Lozano{-}P{\'{e}}rez.
\newblock Differentiable algorithm networks for composable robot learning.
\newblock In {\em Robotics: Science and Systems XV (RSS'19)}, Freiburg im
  Breisgau, Germany, 2019.

\bibitem{ke2018modeling}
N.~R. Ke, A.~Singh, A.~Touati, A.~Goyal, Y.~Bengio, D.~Parikh, and D.~Batra.
\newblock Modeling the long term future in model-based reinforcement learning.
\newblock In {\em 7th International Conference on Learning Representations
  (ICLR'19)}, New Orleans, LA, 2019.

\bibitem{bmpo}
H.~Lai, J.~Shen, W.~Zhang, and Y.~Yu.
\newblock Bidirectional model-based policy optimization.
\newblock In {\em Proceedings of the 37th International Conference on Machine
  Learning (ICML'20)}, Virtual Conference, 2020.

\bibitem{slbo}
Y.~Luo, H.~Xu, Y.~Li, Y.~Tian, T.~Darrell, and T.~Ma.
\newblock Algorithmic framework for model-based deep reinforcement learning
  with theoretical guarantees.
\newblock In {\em 7th International Conference on Learning Representations
  (ICLR'19)}, New Orleans, LA, 2019.

\bibitem{dqn}
V.~Mnih, K.~Kavukcuoglu, D.~Silver, A.~A. Rusu, J.~Veness, M.~G. Bellemare,
  A.~Graves, M.~A. Riedmiller, A.~Fidjeland, G.~Ostrovski, S.~Petersen,
  C.~Beattie, A.~Sadik, I.~Antonoglou, H.~King, D.~Kumaran, D.~Wierstra,
  S.~Legg, and D.~Hassabis.
\newblock Human-level control through deep reinforcement learning.
\newblock {\em Nature}, 518(7540):529--533, 2015.

\bibitem{mb-mf}
A.~Nagabandi, G.~Kahn, R.~S. Fearing, and S.~Levine.
\newblock Neural network dynamics for model-based deep reinforcement learning
  with model-free fine-tuning.
\newblock In {\em 2018 {IEEE} International Conference on Robotics and
  Automation (ICRA'18)}, Brisbane, Australia, 2018.

\bibitem{pinet}
M.~Okada, L.~Rigazio, and T.~Aoshima.
\newblock Path integral networks: End-to-end differentiable optimal control.
\newblock {\em CoRR}, abs/1706.09597, 2017.

\bibitem{m2ac}
F.~Pan, J.~He, D.~Tu, and Q.~He.
\newblock Trust the model when it is confident: Masked model-based
  actor-critic.
\newblock In {\em Advances in Neural Information Processing Systems 33
  (NeurIPS'20)}, Virtual Conference, 2020.

\bibitem{trpo}
J.~Schulman, S.~Levine, P.~Abbeel, M.~I. Jordan, and P.~Moritz.
\newblock Trust region policy optimization.
\newblock In {\em Proceedings of the 32nd International Conference on Machine
  Learning (ICML'15)}, Lille, France, 2015.

\bibitem{shang2021mlj}
W.~Shang, Q.~Li, Z.~Qin, Y.~Yu, Y.~Meng, and J.~Ye.
\newblock Partially observable environment estimation with uplift inference for
  reinforcement learning based recommendation.
\newblock {\em Machine Learning}, 110(9):2603--2640, 2021.

\bibitem{mail}
J.~Shi, Y.~Yu, Q.~Da, S.~Chen, and A.~Zeng.
\newblock Virtual-taobao: Virtualizing real-world online retail environment for
  reinforcement learning.
\newblock In {\em Proceedings of the 33rd {AAAI} Conference on Artificial
  Intelligence (AAAI'19)}, Honolulu, Hawaii, 2019.

\bibitem{upn}
A.~Srinivas, A.~Jabri, P.~Abbeel, S.~Levine, and C.~Finn.
\newblock Universal planning networks: Learning generalizable representations
  for visuomotor control.
\newblock In {\em Proceedings of the 35th International Conference on Machine
  Learning (ICML'18)}, Stockholm, Sweden, 2018.

\bibitem{dyna-q}
R.~S. Sutton.
\newblock Integrated architectures for learning, planning, and reacting based
  on approximating dynamic programming.
\newblock In {\em Proceedings of the 7th International Conference on Machine
  Learning (ICML'90)}, Austin, Texas, 1990.

\bibitem{mujoco}
E.~Todorov, T.~Erez, and Y.~Tassa.
\newblock Mujoco: {A} physics engine for model-based control.
\newblock In {\em {IEEE/RSJ} International Conference on Intelligent Robots and
  Systems (IROS'20)}, Vilamoura, Portugal, 2012.

\bibitem{xu2021error}
T.~Xu, Z.~Li, and Y.~Yu.
\newblock Error bounds of imitating policies and environments for reinforcement
  learning.
\newblock {\em IEEE Transactions on Pattern Analysis and Machine Intelligence},
  2021.

\bibitem{yu2018sample}
Y.~Yu.
\newblock Towards sample efficient reinforcement learning.
\newblock In {\em Proceedings of the 27th International Joint Conference on
  Artificial Intelligence (IJCAI'18)}, Stockholm, Sweden, 2018.

\bibitem{gpm}
H.~Zhang, W.~Xu, and H.~Yu.
\newblock Generative planning for temporally coordinated exploration in
  reinforcement learning.
\newblock In {\em 10th International Conference on Learning Representations
  (ICLR'22)}, Virtual Conference, 2022.

\bibitem{zhu2022arxiv}
Z.-M. Zhu, X.-H. Chen, H.-L. Tian, K.~Zhang, and Y.~Yu.
\newblock Offline reinforcement learning with causal structured world models.
\newblock {\em arXiv preprint arXiv:2206.01474}, 2022.

\end{thebibliography}
\bibliographystyle{abbrv}

\clearpage
\appendix
\section{Planning Policy Iteration}
We first define the Bellman operator:
\begin{equation}
\mathcal{T}^\pi Q(s_t,\boldsymbol{\tau}_t^k)=\mathbb{E}_{p,r}\left[\sum_{m=0}^{k-1}\gamma^m r_{t+m}\right]+\gamma^k\mathbb{E}_{p}\left[ V(s_{t+k})\right],
\end{equation}
where
\begin{equation}
V(s_t) = \mathbb{E}_{\boldsymbol{\tau}_t^k\sim \boldsymbol{\pi}^k}\left[Q(s_{t},\boldsymbol{\tau}_t^k)\right],
\end{equation}
is the state value function. Then the proof of planning policy iteration will be given as follows.
\subsection{Lemma \ref{lemma1}}\label{apdx_lemma1}

\begin{apdx_lemma}[Planning Policy Evaluation]
Given any initial mapping $Q_0:\mathcal{S}\times\mathcal{A}^k\to \mathbb{R}$ with $|\mathcal{A}|<\infty$, update $Q_i$ to $Q_{i+1}$ with $Q_{i+1}=\mathcal{T}^\pi Q_i$ for all $i\in N$, $\{Q_i\}$ will converge to plan value of policy $\pi$ as $i\to\infty$.
\end{apdx_lemma}
\begin{proof}
For any $i\in N$, after updating $Q_i$ to $Q_{i+1}$ with $\mathcal{T}^\pi$, we have
\begin{align}
&Q_{i+1}(s_t,\boldsymbol{\tau}_t^k)=\mathcal{T}^\pi Q_k(s_t,\boldsymbol{\tau}_t^k)\\
=&\label{apdx_dp}\mathbb{E}_{p,r}\left[\sum_{m=0}^{k-1}\gamma^m r_{t+m}+\gamma^k \mathbb{E}_{\hat{\boldsymbol{\tau}}^k\sim \boldsymbol{\pi}^k}\left[Q_i(s_{t+k},\hat{\boldsymbol{\tau}}^k)\right]\right],
\end{align}
for any $s_t\in\mathcal{S}$ and $\boldsymbol{\tau}_t^k\in\mathcal{A}^k$. Then
\begin{equation}
\begin{aligned}
&\|Q_{i+1}-Q^\pi\|_\infty\\
=&\max_{s_t\in\mathcal{S},\boldsymbol{\tau}_t^k\in\mathcal{A}^k} 
 \left|Q_{i+1}(s_t,\boldsymbol{\tau}_t^k)-Q^\pi(s_t,\boldsymbol{\tau}_t^k)\right|\\
=&\max_{s_t\in\mathcal{S},\boldsymbol{\tau}_t^k\in\mathcal{A}^k} 
 \gamma^k \left|\mathbb{E}_{p,\boldsymbol{\pi}^k}\left[Q_i(s_{t+k},\hat{\boldsymbol{\tau}}^k)-Q^\pi(s_{t+k},\hat{\boldsymbol{\tau}}^k)\right]\right| \\
\leq &\max_{s_t\in\mathcal{S},\boldsymbol{\tau}_t^k\in\mathcal{A}^k} 
 \gamma^k \|Q_{i}-Q^\pi\|_\infty\\
=&\gamma^k \|Q_{i}-Q^\pi\|_\infty.
\end{aligned}
\end{equation}
So $Q_i=Q_\pi$ is a fixed point of this update rule, and the sequence $\{Q_i\}$ will converge to the plan value of policy $\pi$ as $i\to\infty$.
\end{proof}

\subsection{Lemma \ref{lemma2}}\label{apdx_lemma2}
\begin{apdx_lemma}[Planning Policy Improvement]
Given any mapping $\pi_{\mathrm{old}}\in\Pi:\mathcal{S}\to\Delta(\mathcal{A})$ with $|\mathcal{A}|<\infty$, update $\pi_\mathrm{old}$ to $\pi_{\mathrm{new}}$ with equation
\begin{equation}
    \pi_{\mathrm{new}} = \arg\max_{\pi\in\Pi} \sum_{\boldsymbol{\tau}_t^k\in\mathcal{A}^k}\boldsymbol{\pi}^k(\boldsymbol{\tau}_t^k|s_t)Q^{\pi_{\mathrm{old}}}(s_t,\boldsymbol{\tau}_t^k),
\end{equation} 
for each state $s_t$, then $Q^{\pi_{\mathrm{new}}}(s_t,\boldsymbol{\tau}_t^k)\geq Q^{\pi_{\mathrm{old}}}(s_t,\boldsymbol{\tau}_t^k)$, $\forall s_t\in\mathcal{S}$, $\boldsymbol{\tau}_t^k\in\mathcal{A}^k$.
\end{apdx_lemma}
\begin{proof}
By definition of the state value function, we have
\begin{equation}
\begin{aligned}
\label{a21}
V^{\pi_{\mathrm{old}}}(s_t) &= 
\sum_{\boldsymbol{\tau}_t^k\in\mathcal{A}^k}\boldsymbol{\pi}^k_{\mathrm{old}}(\boldsymbol{\tau}_t^k|s_t)Q^{\pi_{\mathrm{old}}}(s_t,\boldsymbol{\tau}_t^k) \\
&\leq \sum_{\boldsymbol{\tau}_t^k\in\mathcal{A}^k}\boldsymbol{\pi}^k_{\mathrm{new}}(\boldsymbol{\tau}_t^k|s_t)Q^{\pi_{\mathrm{old}}}(s_t,\boldsymbol{\tau}_t^k),
\end{aligned}
\end{equation}
for any $s_t\in\mathcal{S}$. Reapplying the inequality \eqref{a21}, the result is given by
\begin{equation}
\begin{aligned}
&Q^{\pi_{\mathrm{old}}}(s_t,\boldsymbol{\tau}_t^k)
=\mathbb{E}_{p,r}\left[\sum_{m=0}^{k-1}\gamma^m r_{t+m}+\gamma^k V^{\pi_{\mathrm{old}}}(s_{t+k})\right]\\
\leq& \mathbb{E}_{p,r}\left[\sum_{m=0}^{k-1}\gamma^m r_{t+m}+\gamma^k \sum_{\boldsymbol{\tau}^k\in\mathcal{A}^k}\boldsymbol{\pi}^k_{\mathrm{new}}(\boldsymbol{\tau}^k|s_{t+k})Q^{\pi_{\mathrm{old}}}(s_{t+k},\boldsymbol{\tau}^k)\right]\\
=& \mathbb{E}_{p,r,\pi_{\mathrm{new}}}\left[\sum_{m=0}^{2k-1}\gamma^m r_{t+m}+\gamma^{2k} V^{\pi_{\mathrm{old}}}(s_{t+2k})\right]\\
\vdots&\\
=& \mathbb{E}_{p,r,\pi_{\mathrm{new}}}\left[\sum_{m=0}^{\infty}\gamma^m r_{t+m}\right]=Q^{\pi_{\mathrm{new}}}(s_t,\boldsymbol{\tau}_t^k),
\end{aligned}
\end{equation}
$\forall s_t\in\mathcal{S}$, $\boldsymbol{\tau}_t^k\in\mathcal{A}^k$.
\end{proof}

\subsection{Theorem \ref{thm1}}\label{apdx_thm1}
\begin{apdx_thm}[Planning Policy Iteration]
Given any initial mapping $\pi_0\in\Pi:\mathcal{S}\to\Delta(\mathcal{A})$ with $|\mathcal{A}|<\infty$, compute corresponding $Q^{\pi_i}$ in planning policy evaluation step and update $\pi_i$ to $\pi_{i+1}$ in planning policy improvement step for all $i\in N$, $\{\pi_i\}$ will converge to the optimal policy $\pi_*$ that $Q^{\pi_*}(s_t,\boldsymbol{\tau}_t^k)\geq Q^{\pi}(s_t,\boldsymbol{\tau}_t^k)$, $\forall s_t\in\mathcal{S}$, $\boldsymbol{\tau}_t^k\in\mathcal{A}^k$, and $\pi\in\Pi$.
\end{apdx_thm}
\begin{proof}
The sequence $\{\pi_i\}$ will converge to some $\pi_*$ since $\{Q^{\pi_i}\}$ is monotonically increasing with $i$ and bounded above $\pi\in\Pi$. We next show that $\pi_*$ is the optimal policy indeed. By Lemma 2\ref{apdx_lemma2}, at convergence, we can't find any $\pi\in\Pi$ satisfying
\begin{equation}
V^{\pi_*}(s_t)<\sum_{\boldsymbol{\tau}_t^k\in\mathcal{A}^k}\boldsymbol{\pi}^k(\boldsymbol{\tau}_t^k|s_t)Q^{\pi_*}(s_t,\boldsymbol{\tau}_t^k),
\end{equation}
for any $s_t\in\mathcal{S}$. In other words, we have
\begin{equation}
\label{a31}
V^{\pi_*}(s_t)\geq \sum_{\boldsymbol{\tau}_t^k\in\mathcal{A}^k}\boldsymbol{\pi}^k(\boldsymbol{\tau}_t^k|s_t)Q^{\pi_{*}}(s_t,\boldsymbol{\tau}_t^k),
\end{equation}
for any $s_t\in\mathcal{S}$ and $\pi\in\Pi$. Then we obtain
\begin{equation}
\begin{aligned}
&Q^{\pi_{*}}(s_t,\boldsymbol{\tau}_t^k)
=\mathbb{E}_{p,r}\left[\sum_{m=0}^{k-1}\gamma^m r_{t+m}+\gamma^k V^{\pi_{*}}(s_{t+k})\right]\\
\geq& \mathbb{E}_{p,r}\left[\sum_{m=0}^{k-1}\gamma^m r_{t+m}+\gamma^k \sum_{\boldsymbol{\tau}^k\in\mathcal{A}^k}\boldsymbol{\pi}^k(\boldsymbol{\tau}^k|s_{t+k})Q^{\pi_{*}}(s_{t+k},\boldsymbol{\tau}^k)\right]\\
=& \mathbb{E}_{p,r,\pi}\left[\sum_{m=0}^{2k-1}\gamma^m r_{t+m}+\gamma^{2k} V^{\pi_{*}}(s_{t+2k})\right]\\
\vdots&\\
=& \mathbb{E}_{p,r,\pi}\left[\sum_{m=0}^{\infty}\gamma^m r_{t+m}\right]=Q^{\pi}(s_t,\boldsymbol{\tau}_t^k),
\end{aligned}
\end{equation}
$\forall s_t\in\mathcal{S}$, $\boldsymbol{\tau}_t^k\in\mathcal{A}^k$, and $\pi\in\Pi$. Hence $\pi_*$ is the optimal policy in $\Pi$.
\end{proof}

\section{Soft Planning Policy Iteration}\label{softppi}
We define the soft plan value function with entropy as
\begin{equation}
\begin{aligned}
&Q^\pi(s_t,\boldsymbol{\tau}_t^k)=Q^\pi(s_t,a_t,a_{t+1},...,a_{t+k-1})\\
=&\mathbb{E}_{p,r,\pi}\left[\sum_{m=0}^{k-1}\left(\gamma^m r_{t+m}-\log\pi(a_{t+m}|s_{t+m})\right)\right.\\
&~~~~+\left.\gamma^k \sum_{m=0}^{\infty}\left(\gamma^m r_{t+k+m}-\log \pi(a_{t+k+m}|s_{t+k+m})\right)\right].
\end{aligned}
\end{equation}
Then extended soft Bellman operator can be written as
\begin{equation}
\mathcal{T}^\pi Q(s_t,\boldsymbol{\tau}_t^k)=\mathbb{E}_{p,r}\left[\sum_{m=0}^{k-1}\gamma^m r_{t+m}\right]+\gamma^k\mathbb{E}_{p}\left[ V(s_{t+k})\right],
\end{equation}
where
\begin{equation}
V(s_t) = \mathbb{E}_{\boldsymbol{\tau}_t^k\sim \boldsymbol{\pi}^k}\left[Q(s_{t},\boldsymbol{\tau}_t^k)
-\sum_{m=0}^{k-1}\log\pi(a_{t+m}|s_{t+m})\right],
\end{equation}
is the soft state value function. The proof of soft planning policy iteration will be given as follows.

\subsection{Lemma 3}\label{apdx_lemma3}
\begin{apdx_lemma}[Soft Planning Policy Evaluation]
Given any initial mapping $Q_0:\mathcal{S}\times\mathcal{A}^k\to \mathbb{R}$ with $|\mathcal{A}|<\infty$, update $Q_i$ to $Q_{i+1}$ with $Q_{i+1}=\mathcal{T}^\pi Q_i$ for all $i\in N$, $\{Q_i\}$ will converge to soft plan value of policy $\pi$ as $i\to\infty$.
\end{apdx_lemma}
\begin{proof}
Define $k$-step reward $r^\pi(s_t, \boldsymbol{\tau}_t^k)$ with entropy augmentation as
\begin{equation}
\mathbb{E}_{p,r,\pi}\left[\sum_{m=0}^{k-1}\left(\gamma^m r_{t+m}-\gamma^k\log\pi(a_{t+k+m}|s_{t+k+m})\right)\right],
\end{equation}
and rewrite the update rule as
\begin{equation}
Q_{i+1}(s_t,\boldsymbol{\tau}_t^k)\leftarrow r^\pi(s_t, \boldsymbol{\tau}_t^k)+\gamma^k\mathbb{E}_{p,\boldsymbol{\pi}^k}\left[ Q_i(s_{t+k},\boldsymbol{\tau}_{t+k}^k)\right].
\end{equation}
Then applying the standard convergence result for planning policy evaluation given by Lemma 1\ref{apdx_lemma1}, we can obtain that $\{Q_i\}$ will converge to the soft plan value of policy $\pi$ as $i\to\infty$.
\end{proof}

\subsection{Lemma 4}\label{apdx_lemma4}
\begin{apdx_lemma}[Soft Planning Policy Improvement]
Given any mapping $\pi_{\mathrm{old}}\in\Pi:\mathcal{S}\to\Delta(\mathcal{A})$ with $|\mathcal{A}|<\infty$, update $\pi_\mathrm{old}$ to $\pi_{\mathrm{new}}$ with equation
\begin{equation}
\pi_{\mathrm{new}}=\arg \min_{\pi\in\Pi}\mathrm{D}_{\mathrm{KL}}\left(\boldsymbol{\pi}^k(\cdot|s_t)\bigg\|\frac{\mathrm{exp}(Q^{\pi_{\mathrm{old}}}(s_t,\cdot))}{Z^{\pi_{\mathrm{old}}}(s_t)}\right),
\end{equation} 
for each state $s_t$, where $Z^{\pi_{\mathrm{old}}}(s_t)$ normalizes the distribution, then $Q^{\pi_{\mathrm{new}}}(s_t,\boldsymbol{\tau}_t^k)\geq Q^{\pi_{\mathrm{old}}}(s_t,\boldsymbol{\tau}_t^k)$, $\forall s_t\in\mathcal{S}$, $\boldsymbol{\tau}_t^k\in\mathcal{A}^k$.
\end{apdx_lemma}
\begin{proof}
Define
\begin{equation}
J_{\pi_{\mathrm{old}}}(\boldsymbol{\pi}^k(\cdot|s_t))
=\mathrm{D}_{\mathrm{KL}}\left(\boldsymbol{\pi}^k(\cdot|s_t)\bigg\|\frac{\mathrm{exp}(Q^{\pi_{\mathrm{old}}}(s_t,\cdot))}{Z^{\pi_{\mathrm{old}}}(s_t)}\right),
\end{equation}
then it must be the case that
\begin{equation}J_{\pi_{\mathrm{old}}}(\boldsymbol{\pi}^k_{\mathrm{new}}(\cdot|s_t))\leq J_{\pi_{\mathrm{old}}}(\boldsymbol{\pi}^k_{\mathrm{old}}(\cdot|s_t)),
\end{equation}
since we choose $\pi_{\mathrm{new}}=\arg \min_{\pi\in\Pi} J_{\pi_{\mathrm{old}}}(\boldsymbol{\pi}^k(\cdot|s_t))$.
The inequality reduces to
\begin{equation}
\begin{aligned}
\label{a41}
&\mathbb{E}_{\boldsymbol{\tau}_t^k\sim\boldsymbol{\pi}^k_{\mathrm{new}}}
\left[Q^{\pi_{\mathrm{old}}}(s_t,\boldsymbol{\tau}_t^k)
-\log\boldsymbol{\pi}^k_{\mathrm{new}}(\boldsymbol{\tau}_t^k|s_t)\right]\\
\geq & \mathbb{E}_{\boldsymbol{\tau}_t^k\sim\boldsymbol{\pi}^k_{\mathrm{old}}}
\left[Q^{\pi_{\mathrm{old}}}(s_t,\boldsymbol{\tau}_t^k)
-\log \boldsymbol{\pi}^k_{\mathrm{old}}(\boldsymbol{\tau}_t^k|s_t)\right]=V^{\pi_{\mathrm{old}}}(s_t),
\end{aligned}
\end{equation}
since $Z^{\pi_{\mathrm{old}}}(s_t)$ only depends on the state. For convenience, we further define 
\begin{equation}
G_{\pi_{\mathrm{old}}}(\boldsymbol{\pi}^k(\cdot|s_t))=\mathbb{E}_{\boldsymbol{\tau}_t^k\sim\boldsymbol{\pi}^k}
\left[Q^{\pi_{\mathrm{old}}}(s_t,\boldsymbol{\tau}_t^k)-\log \boldsymbol{\pi}^k(\boldsymbol{\tau}_t^k|s_t)\right],
\end{equation}
and rewrite the inequality \eqref{a41} as
\begin{equation}
\label{a42}
G_{\pi_{\mathrm{old}}}(\boldsymbol{\pi}_{\mathrm{new}}^k(\cdot|s_t))\geq V^{\pi_{\mathrm{old}}}(s_t).
\end{equation}
Reapplying the inequality \eqref{a42}, the result is given by
\begin{equation}
\begin{aligned}
&Q^{\pi_{\mathrm{old}}}(s_t,\boldsymbol{\tau}_t^k)
=\mathbb{E}_{p,r}\left[\sum_{m=0}^{k-1}\gamma^m r_{t+m}+\gamma^k V^{\pi_{\mathrm{old}}}(s_{t+k})\right]\\
\leq& \mathbb{E}_{p,r}\left[\sum_{m=0}^{k-1}\gamma^m r_{t+m}+\gamma^k G_{\pi_{\mathrm{old}}}(\boldsymbol{\pi}_{\mathrm{new}}^k(\cdot|s_{t+k}))\right]\\
=& \mathbb{E}_{p,r,\pi_{\mathrm{new}}}\left[r^{\pi_{\mathrm{new}}}(s_t, \boldsymbol{\tau}_t^k)+\sum_{m=k}^{2k-1}\gamma^m r_{t+m}+\gamma^{2k} V^{\pi_{\mathrm{old}}}(s_{t+2k})\right]\\
\vdots&\\
=& \mathbb{E}_{p,r,\pi_{\mathrm{new}}}\left[\sum_{m=0}^{\infty}\left(\gamma^m r_{t+m}-\log\pi(a_{t+m}|s_{t+m})\right)\right]\\
=&Q^{\pi_{\mathrm{new}}}(s_t,\boldsymbol{\tau}_t^k),
\end{aligned}
\end{equation}
$\forall s_t\in\mathcal{S}$, $\boldsymbol{\tau}_t^k\in\mathcal{A}^k$.
\end{proof}

\begin{table*}[t!]
\small
\centering
\begin{tabular}{|c|c|c|c|c|c|c|}
\hline
\multirow{2}{*}{environment} & Inverted & \multirow{2}{*}{Hopper} & \multirow{2}{*}{Swimmer} & \multirow{2}{*}{Half Cheetah} & \multirow{2}{*}{Walker2d} & \multirow{2}{*}{Ant}\\
\multirow{2}{*}{} & Pendulum & \multirow{2}{*}{} & \multirow{2}{*}{} & \multirow{2}{*}{} & \multirow{2}{*}{} & \multirow{2}{*}{} \\
\hline
\multirow{2}{*}{steps} & \multirow{2}{*}{10k} & \multirow{2}{*}{100k} & \multicolumn{3}{c|}{\multirow{2}{*}{200k}} & \multirow{2}{*}{300k} \\
\multirow{2}{*}{} & \multirow{2}{*}{} & \multirow{2}{*}{} & \multicolumn{3}{c|}{\multirow{2}{*}{}} & \multirow{2}{*}{} \\

\hline
\multirow{2}{*}{Update-To-Date ratio} & \multicolumn{6}{c|}{\multirow{2}{*}{1 for actor, 20 for critic}} \\
\multirow{2}{*}{} & \multicolumn{6}{c|}{\multirow{2}{*}{}}\\

\hline
\multirow{2}{*}{plan length $k$} & \multicolumn{3}{c|}{\multirow{2}{*}{3}} & \multicolumn{3}{c|}{\multirow{2}{*}{2}} \\
\multirow{2}{*}{}& \multicolumn{3}{c|}{\multirow{2}{*}{}} & \multicolumn{3}{c|}{\multirow{2}{*}{}}\\

\hline
\multirow{2}{*}{model rollout schedule} & 1$\rightarrow$5 over & 1$\rightarrow$4 over & \multirow{2}{*}{1} & 1$\rightarrow$4 over & \multirow{2}{*}{1} & 1$\rightarrow$20 over \\
\multirow{2}{*}{} & 0$\rightarrow$1k & 20k$\rightarrow$50k & \multirow{2}{*}{} & 20k$\rightarrow$80k &\multirow{2}{*}{} & 20k$\rightarrow$150k \\

\hline
\multirow{2}{*}{target entropy} & \multirow{2}{*}{-0.05} & \multicolumn{2}{c|}{\multirow{2}{*}{-1}} & \multicolumn{2}{c|}{\multirow{2}{*}{-3}} & \multirow{2}{*}{-4} \\
\multirow{2}{*}{} & \multirow{2}{*}{} & \multicolumn{2}{c|}{\multirow{2}{*}{}} & \multicolumn{2}{c|}{\multirow{2}{*}{}} & \multirow{2}{*}{} \\
\hline

\end{tabular}
\caption{Hyper-parameter settings for MPPVE results presented in Figure \ref{fig3}. $x\rightarrow y$ over $a\rightarrow b$ denotes a thresholded linear increasing schedule, \textit{i.e.} the length of model rollouts at step $t$ is calculated by $f(t)=\min \left(\max \left(x+\frac{t-a}{b-a} \cdot(y-x), x\right), y\right)$.}
\label{hyperparam}
\end{table*}

\subsection{Theorem 2}\label{apdx_thm2}
\begin{apdx_thm}[Soft Planning Policy Iteration]
Given any initial mapping $\pi_0\in\Pi:\mathcal{S}\to\Delta(\mathcal{A})$ with $|\mathcal{A}|<\infty$, compute corresponding $Q^{\pi_i}$ in soft planning policy evaluation step and update $\pi_i$ to $\pi_{i+1}$ in soft planning policy improvement step for all $i\in N$, $\{\pi_i\}$ will converge to the optimal policy $\pi_*$ that $Q^{\pi_*}(s_t,\boldsymbol{\tau}_t^k)\geq Q^{\pi}(s_t,\boldsymbol{\tau}_t^k)$, $\forall s_t\in\mathcal{S}$, $\boldsymbol{\tau}_t^k\in\mathcal{A}^k$, and $\pi\in\Pi$.
\end{apdx_thm}
\begin{proof}
The sequence $\{\pi_i\}$ will converge to some $\pi_*$ since $\{Q^{\pi_i}\}$ is monotonically increasing with $i$ and bounded above $\pi\in\Pi$. We next show that $\pi_*$ is the optimal policy indeed. By Lemma 4\ref{apdx_lemma4}, at convergence, we can't find any $\pi\in\Pi$ satisfying
\begin{equation}
V^{\pi_*}(s_t)<\mathbb{E}_{\boldsymbol{\tau}_t^k\sim\boldsymbol{\pi}^k}
\left[Q^{\pi_{*}}(s_t,\boldsymbol{\tau}_t^k)
-\log\boldsymbol{\pi}^k(\boldsymbol{\tau}_t^k|s_t)\right],
\end{equation}
for any $s_t\in\mathcal{S}$. In other words, we have
\begin{equation}
\label{a61}
V^{\pi_*}(s_t)\geq \mathbb{E}_{\boldsymbol{\tau}_t^k\sim\boldsymbol{\pi}^k}
\left[Q^{\pi_{*}}(s_t,\boldsymbol{\tau}_t^k)
-\log\boldsymbol{\pi}^k(\boldsymbol{\tau}_t^k|s_t)\right],
\end{equation}
which can be rewritten as
\begin{equation}
V^{\pi_*}(s_t)\geq G_{\pi_{*}}(\boldsymbol{\pi}^k(\cdot|s_t)),
\end{equation}
for any $s_t\in\mathcal{S}$ and $\pi\in\Pi$. Then we obtain
\begin{equation}
\begin{aligned}
&Q^{\pi_{*}}(s_t,\boldsymbol{\tau}_t^k)
=\mathbb{E}_{p,r}\left[\sum_{m=0}^{k-1}\gamma^m r_{t+m}+\gamma^k V^{\pi_{*}}(s_{t+k})\right]\\
\geq& \mathbb{E}_{p,r}\left[\sum_{m=0}^{k-1}\gamma^m r_{t+m}+\gamma^k G_{\pi_{*}}(\boldsymbol{\pi}^k(\cdot|s_t))\right]\\
=& \mathbb{E}_{p,r,\pi}\left[r^\pi(s_t, \boldsymbol{\tau}_t^k)+\sum_{m=k}^{2k-1}\gamma^m r_{t+m}+\gamma^{2k} V^{\pi_{*}}(s_{t+2k})\right]\\
\vdots&\\
=& \mathbb{E}_{p,r,\pi}\left[\sum_{m=0}^{\infty}\left(\gamma^m r_{t+m}-\log\pi(a_{t+m}|s_{t+m})\right)\right]\\
=&Q^{\pi}(s_t,\boldsymbol{\tau}_t^k),
\end{aligned}
\end{equation}
$\forall s_t\in\mathcal{S}$, $\boldsymbol{\tau}_t^k\in\mathcal{A}^k$, and $\pi\in\Pi$. Hence $\pi_*$ is the optimal policy in $\Pi$.
\end{proof}

\section{Soft MPPVE}
\label{softMPPVE}
\subsection{Model Learning}
The same as the vanilla MPPVE.

\subsection{Soft Multi-step Plan Value Estimation}
Soft plan value is estimated via minimizing the expected multi-step temporal-difference error:
\begin{equation}
\label{softQobject}
J_Q(\psi) = \mathbb{E}_{(s_t,\boldsymbol{\tau}^k_t,\boldsymbol{r}^k_t,s_{t+k})\sim \mathcal{D}_{\mathrm{env}}\cup \mathcal{D}_{\mathrm{model}}}\left[l_{\mathrm{TD}}(s_t,\boldsymbol{\tau}^k_t,\boldsymbol{r}^k_t,s_{t+k})\right],
\end{equation}
with
\begin{equation}
\boldsymbol{r}^k_t = (r_t,r_{t+1},...,r_{t+k-1}),
\end{equation}
\begin{equation}
l_{\mathrm{TD}}(s_t,\boldsymbol{\tau}^k_t,\boldsymbol{r}^k_t,s_{t+k})=\frac{1}{2}\left(Q_\psi(s_t,\boldsymbol{\tau}^k_t)
-y_{\psi^-}(\boldsymbol{r}^k_t,s_{t+k})\right)^2,
\end{equation}
\begin{equation}
\begin{aligned}
y_{\psi^-}&(\boldsymbol{r}^k_t,s_{t+k})=\sum_{m=0}^{k-1}\gamma^m r_{t+m}\\
&+\gamma^k \mathbb{E}_{\hat{\boldsymbol{\tau}}^k}\left[
Q_{\psi^-}(s_{t+k},\hat{\boldsymbol{\tau}}^k)
-\alpha \log \boldsymbol{\pi}^k_{\phi,\theta}(\hat{\boldsymbol{\tau}}^k|s_{t+k})\right],
\end{aligned}
\end{equation}
where $\psi^-$ is the parameters of the target network, and $\alpha$ is the weight of entropy. The gradient of Eq.\eqref{softQobject} can be estimated without bias by
\begin{equation}
\hat{\nabla}_\psi J_Q(\psi)=
\left(Q_\psi(s_t,\boldsymbol{\tau}^k_t)-\hat{y}_{\psi^-}(\boldsymbol{r}^k_t,s_{t+k})\right) \nabla_\psi Q_\psi(s_t,\boldsymbol{\tau}^k_t),
\end{equation}
with
\begin{equation}
\begin{aligned}
\hat{y}_{\psi^-}&(\boldsymbol{r}^k_t,s_{t+k})=\sum_{m=0}^{k-1}\gamma^m r_{t+m}\\
&+\gamma^k \left(Q_{\psi^-}(s_{t+k},\boldsymbol{\tau}_{t+k}^k)-\alpha \log \boldsymbol{\pi}^k_{\phi,\theta}(\boldsymbol{\tau}_{t+k}^k|s_{t+k})\right),
\end{aligned}
\end{equation}
where $(s_t,\boldsymbol{\tau}^k_t,\boldsymbol{r}^k_t,s_{t+k})$ is sampled from the replay buffer and $\boldsymbol{\tau}_{t+k}^k$ is sampled according to current soft planning policy.

\subsection{Model-based Soft Planning Policy Improvement}
Model-based soft planning policy with maximum entropy can be trained by minimizing
\begin{equation}
\label{softpitarget}
    J_{\boldsymbol{\pi}^k}(\phi)=\mathbb{E}_{s_t\sim\mathcal{D}_{\mathrm{env}}}\left[\mathcal{J}(s_t)\right],
\end{equation}
with 
\begin{equation}
\mathcal{J}(s_t)=\mathbb{E}_{\boldsymbol{\tau}^k_t\sim\boldsymbol{\pi}^k_{\phi,\theta}(\cdot|s_t)}
    \left[-Q_\psi(s_t,\boldsymbol{\tau}^k_t)+\alpha \log \boldsymbol{\pi}^k_{\phi,\theta}(\boldsymbol{\tau}^k|s_{t})\right].
\end{equation}
The gradient of \eqref{softpitarget} can be approximated by
\begin{equation}
\nabla_{\boldsymbol{\tau}^k_t}\left(\log \boldsymbol{\pi}^k_{\phi,\theta}(\boldsymbol{\tau}^k_t|s_{t})-Q_\psi(s_t,\boldsymbol{\tau}^k_t)\right) \nabla_\phi \boldsymbol{f}_{\phi,\theta}(s_t,\boldsymbol{\eta^k_t}),
\end{equation}
where $\boldsymbol{f}_{\phi,\theta}(s_t,\boldsymbol{\eta^k_t})=\boldsymbol{\tau}^k_t$ is the reparameterized neural network transformation of planning policy with $\boldsymbol{\eta}_t^k=(\eta_1,\eta_2,...,\eta_k)$, which is a $k$-step noise vector sampled from Gaussian distribution.

\section{Hyper-parameter Settings}
\label{hyper}
See Table \ref{hyperparam}.

\section{Ablation Study}
We carry out an ablation study with three algorithms to verify that MPPVE’s gain is due to the model-based planning policy improvement based on plan value estimation. 

\begin{figure*}[t!]
    \centering
    \subfigure[Performance]{
        \centering
        \includegraphics[width=0.24\linewidth]{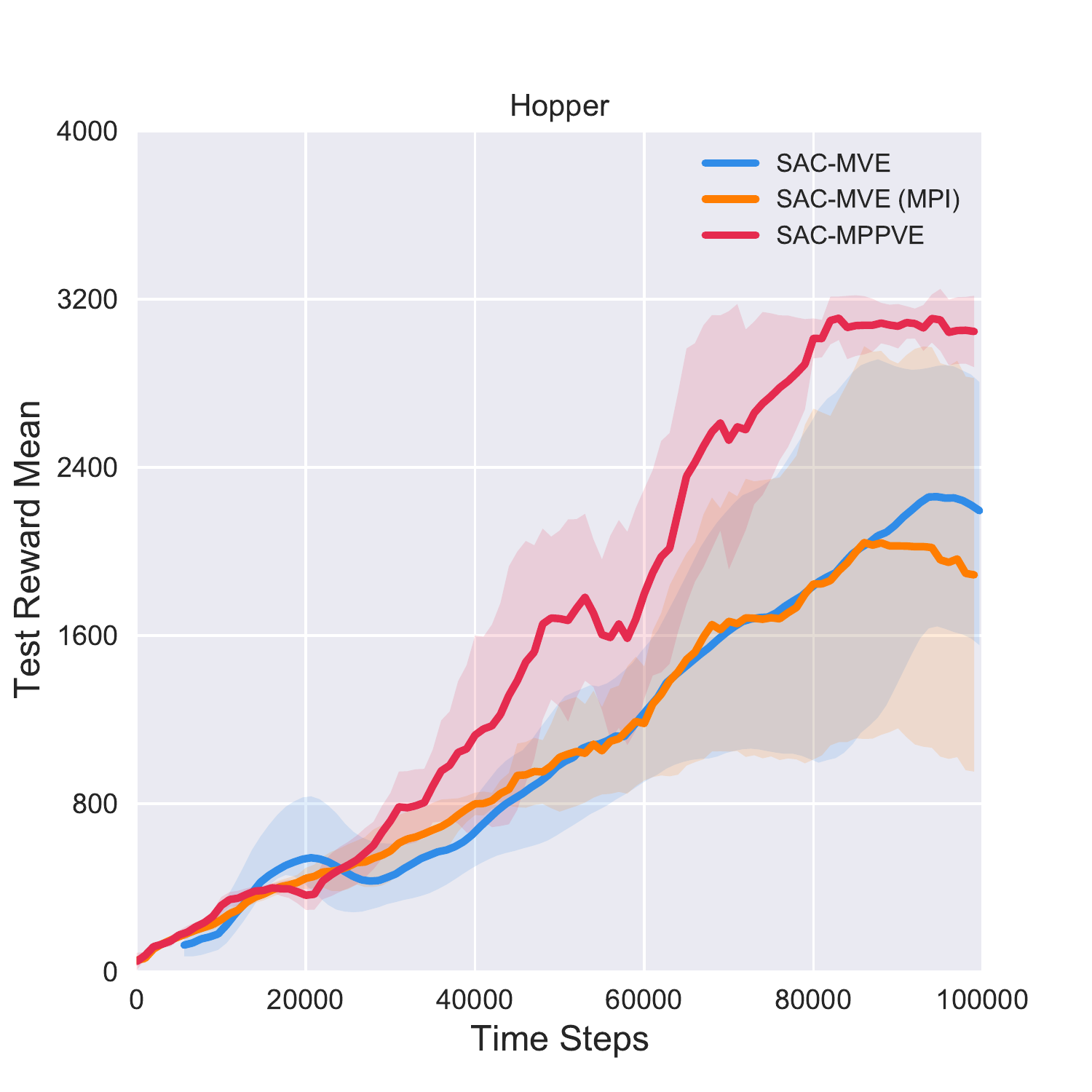}
        \label{fig6(a)}
    }%
    \subfigure[Policy evaluation]{
        \centering
        \includegraphics[width=0.24\linewidth]{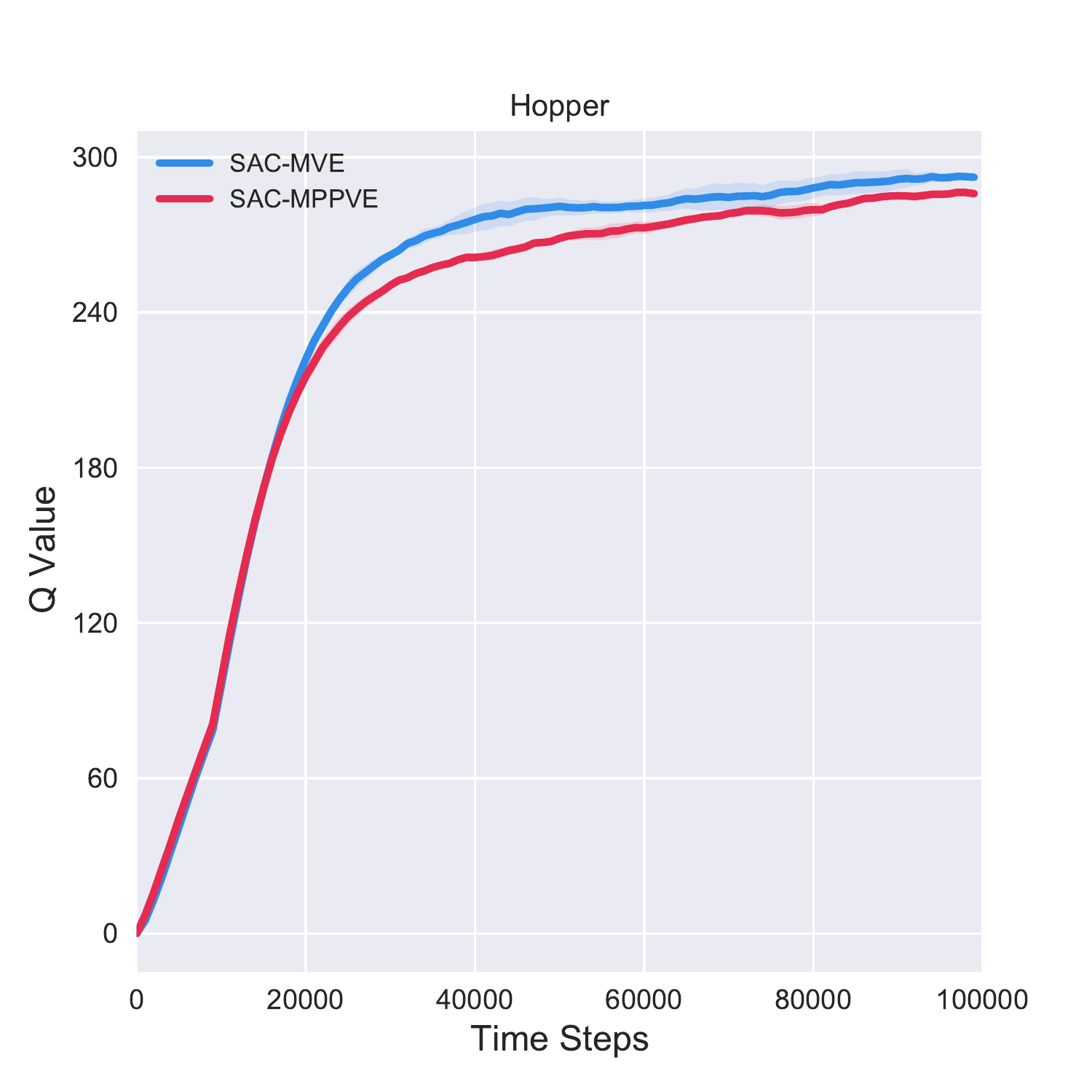}
        \label{fig6(b)}
    }%
    \subfigure[Average of value bias]{
        \centering
        \includegraphics[width=0.24\linewidth]{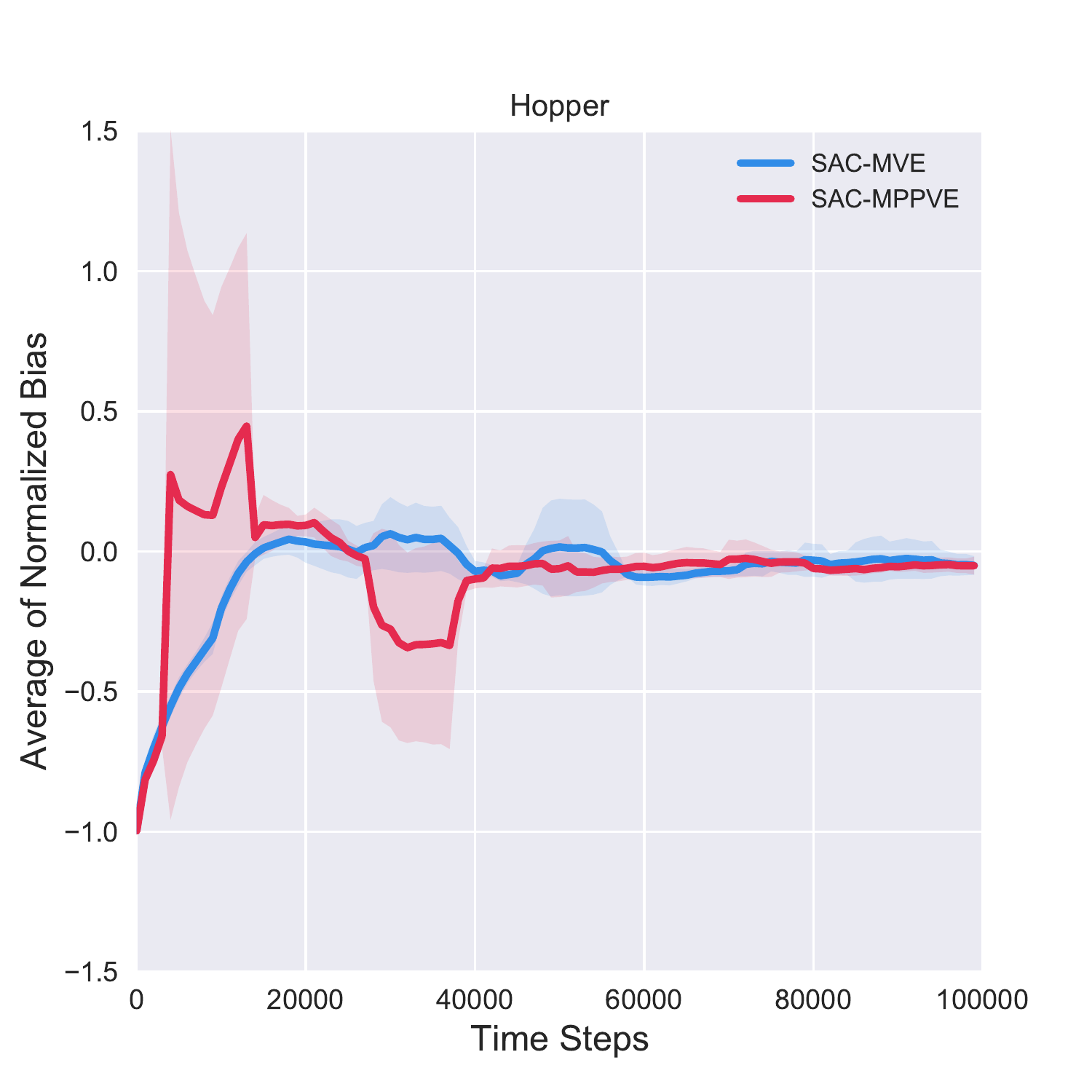}
        \label{fig6(c)}
    }%
    \subfigure[Std of value bias]{
        \centering
        \includegraphics[width=0.24\linewidth]{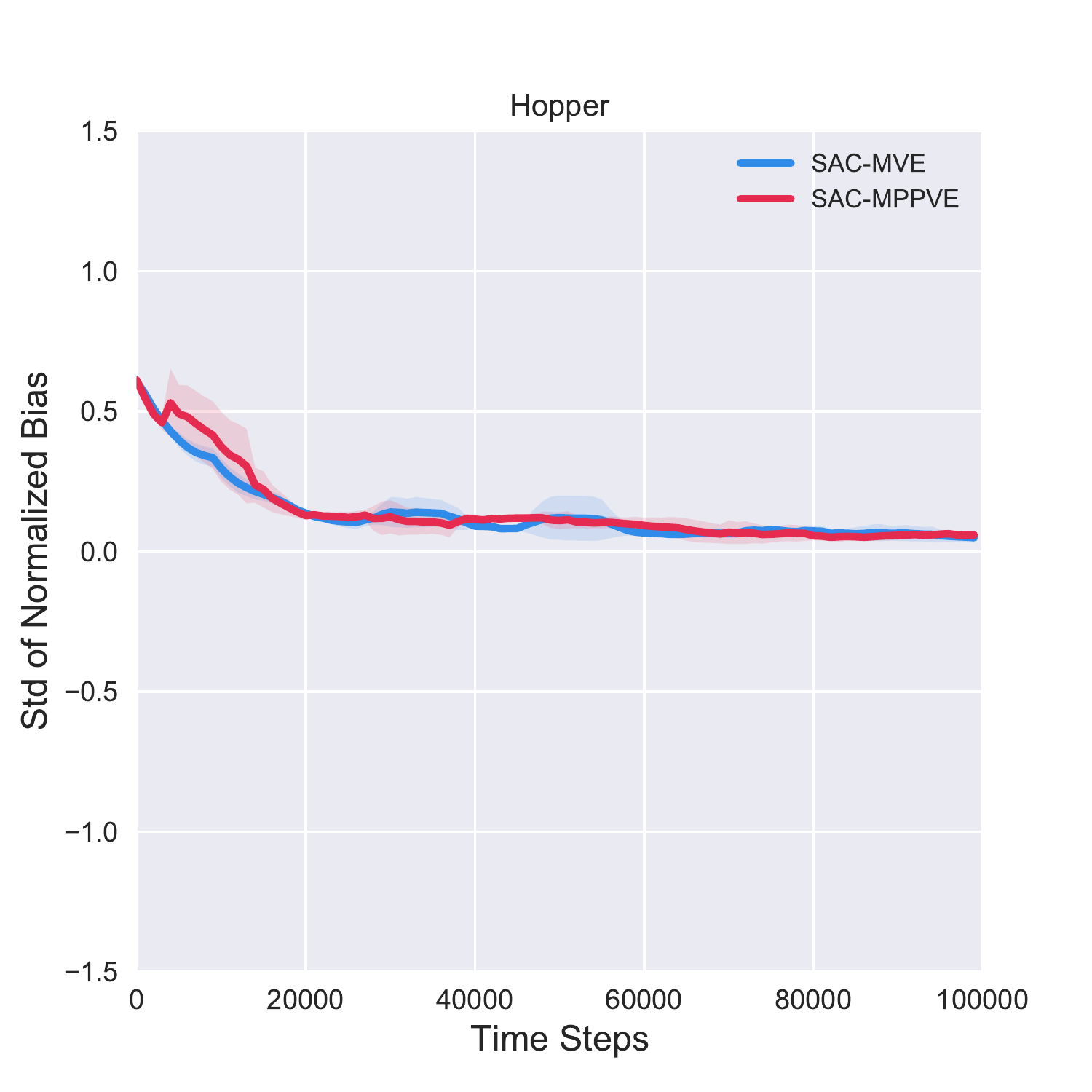}
        \label{fig6(d)}
    }%
    \centering
    \caption{Ablation study for MPPVE on Hopper-v3 task, through comparison of SAC-MPPVE (degraded version of MPPVE), SAC-MVE, and SAC-MVE (MPI). (a) Evaluation of episodic reward during the learning process. (b) Learning value estimation for the same fixed policy, where SAC-MPPVE estimates the plan value while SAC-MVE estimates the action value. (c) Average of the bias between neural value estimation and true value evaluation over state-action(s) space. (d) Standard error of the bias between neural value estimation and true value evaluation over state-action(s) space.}
    \label{fig6}
\end{figure*}

\subsubsection{SAC-MPPVE}
SAC-MPPVE builds on SAC but replaces its original value estimation and policy improvement with our proposed plan value estimation and model-based planning policy improvement, respectively. See Algorithm \ref{sac-MPPVE}.

\begin{algorithm}[h!]
\caption{SAC-MPPVE}
\label{sac-MPPVE}
\textbf{Input}: Initial dynamics model $p_\theta(s^\prime,r|s,a)$, policy $\pi_\phi(a|s)$, plan value function $Q_\psi(s,\boldsymbol{\tau}^k)$ and its target network $Q_{\psi^-}(s,\boldsymbol{\tau}^k)$, plan length $k$, environment buffer $\mathcal{D}_{\mathrm{env}}$, start size $U$, batch size $B$, and learning rate $\lambda_Q$, $\lambda_\pi$.

\begin{algorithmic}[1] 
\STATE Explore in the environment for $U$ steps and add data to $\mathcal{D}_{\mathrm{env}}$
\FOR{$N$ epochs}
\STATE Train model $p_\theta$ on $\mathcal{D}_{\mathrm{env}}$ by maximizing Eq.\eqref{pobject}
\FOR{$E$ steps}
\STATE Sample action to perform in the environment according to $\pi_\phi$; add the environmental transition to $\mathcal{D}_{\mathrm{env}}$
\STATE Sample $B$ $k$-step trajectories from $\mathcal{D}_{\mathrm{env}}$  \STATE Update $Q_\psi$ via $\psi\leftarrow\psi-\lambda_Q\hat{\nabla}_\psi J_{Q}(\psi)$ by Eq.\eqref{Qgrad}
\STATE Update target critic via $\psi^- \leftarrow \tau\psi+(1-\tau)\psi^-$
\STATE Update $\pi_\phi$ via $\phi \leftarrow \phi- \lambda_\pi \hat{\nabla}_\phi J_{\boldsymbol{\pi}^k}(\phi)$ by Eq.\eqref{mspg}
\ENDFOR
\ENDFOR
\end{algorithmic}
\end{algorithm}

\subsubsection{SAC-MVE}
SAC-MVE applies the MVE (Model-based Value Expansion) technique to SAC, which forms $H$-step TD targets by unrolling the model dynamics for $H$ steps. We refer the reader to \cite{mve} for further discussion of this technique. See Algorithm \ref{sac-mve}.

\begin{algorithm}[h!]
\caption{SAC-MVE}
\label{sac-mve}
\textbf{Input}: Initial dynamics model $p_\theta(s^\prime,r|s,a)$, policy $\pi_\phi(a|s)$, action-value $Q_\psi(s,a)$ and its target network $Q_{\psi^-}(s,a)$, rollout length $k$, environment buffer $\mathcal{D}_{\mathrm{env}}$, start size $U$, batch size $B$.

\begin{algorithmic}[1] 
\STATE Explore in the environment for $U$ steps and add data to $\mathcal{D}_{\mathrm{env}}$
\FOR{$N$ epochs}
\STATE Train model $p_\theta$ on $\mathcal{D}_{\mathrm{env}}$ by maximizing Eq.\eqref{pobject}
\FOR{$E$ steps}
\STATE Sample action to perform in the environment according to $\pi_\phi$; add the environmental transition to $\mathcal{D}_{\mathrm{env}}$
\STATE Sample transitions $\tau_{real}$ from $\mathcal{D}_{\mathrm{env}}$ and make $k$-step rollouts to get $\tau_{fake}$
\STATE Update critic $Q_\psi$ on \{$\tau_{real} \cup \tau_{fake}$\} by minimizing MVE \cite{mve} error
\STATE Update target critic via $\psi^- \leftarrow \tau\psi+(1-\tau)\psi^-$
\STATE Update policy $\pi_\phi$ on $\tau_{real}$ via policy gradients given by critic $Q_\psi$
\ENDFOR
\ENDFOR
\end{algorithmic}
\end{algorithm}

\subsubsection{SAC-MVE (MPI)}
SAC-MVE trains the value function on the data sampled in the environment and the imagined data sampled in a learned model but only performs policy improvement on real samples. By contrast, SAC-MVE (MPI) performs policy improvement on both real and imagined samples, so we abbreviate it to MPI (Mixed Policy Improvement) for short. See Algorithm \ref{sac-mve-mpi}.

\begin{algorithm}[h]
\caption{SAC-MVE (MPI)}
\label{sac-mve-mpi}
\textbf{Input}: Initial dynamics model $p_\theta(s^\prime,r|s,a)$, policy $\pi_\phi(a|s)$, action-value $Q_\psi(s,a)$ and its target network $Q_{\psi^-}(s,a)$, rollout length $k$, environment buffer $\mathcal{D}_{\mathrm{env}}$, start size $U$, batch size $B$.

\begin{algorithmic}[1] 
\STATE Explore in the environment for $U$ steps and add data to $\mathcal{D}_{\mathrm{env}}$
\FOR{$N$ epochs}
\STATE Train model $p_\theta$ on $\mathcal{D}_{\mathrm{env}}$ by maximizing Eq.\eqref{pobject}
\FOR{$E$ steps}
\STATE Sample action to perform in the environment according to $\pi_\phi$; add the environmental transition to $\mathcal{D}_{\mathrm{env}}$
\STATE Sample transitions $\tau_{real}$ from $\mathcal{D}_{\mathrm{env}}$ and make $k$-step rollouts to get $\tau_{fake}$
\STATE Update critic $Q_\psi$ on \{$\tau_{real} \cup \tau_{fake}$\} by minimizing MVE \cite{mve} error
\STATE Update target critic via $\psi^- \leftarrow \tau\psi+(1-\tau)\psi^-$
\STATE Update policy $\pi_\phi$ on \{$\tau_{real} \cup \tau_{fake}$\} via policy gradients given by critic $Q_\psi$
\ENDFOR
\ENDFOR
\end{algorithmic}
\end{algorithm}

We must explain why we apply the MVE technique to the algorithms for comparison. First, we note that plan value estimation in our method may enable faster value function learning because it has some high-level similarities to multi-step RL methods. In order to directly verify the effectiveness of the model-based planning policy improvement, we add the MVE technique to the algorithms for comparison to make them and our method have the same learning speed of the value function as far as possible. Second, our method and MVE are similar in model utilization, which facilitates our comparison. Specifically, both of them use the dynamics model to expand several steps on the real states and only use the value estimation on the real states when performing policy improvement. However, our method can cover more states in policy improvement, as shown in Equation \eqref{mspg}, which may improve the sample efficiency. In contrast, the only way that MVE can cover more states in policy improvement is to use the value estimation on the fake states, namely SAC-MVE (MPI). What we want to emphasize is that for traditional methods, it is contradictory to avoid misleading impacts of value estimation on the fake states (SAC-MVE) and cover more states in policy improvement (SAC-MVE (MPI)). The following experimental results will show that our approach balances these two contradictions and can achieve better performance.

Figure \ref{fig6(a)} shows the performance of the three algorithms on the Hopper task. Unsurprisingly, we observe that SAC-MPPVE yields significant improvements in sample efficiency and performance. To directly show that their main difference comes from the different ways of policy improvement, rather than the quality and learning speed of the value function, we quantitatively analyze the value estimation of these three algorithms. Please note that value estimation in SAC-MVE and SAC-MVE (MPI) is identical, so we only select SAC-MPPSE and SAC-MVE for analysis of value estimation without loss of rigor. We first conduct policy evaluation for the same fixed policy on them and plot the learning curves in Figure \ref{fig6(b)}. The speed of policy evaluation of SAC-MPPVE and SAC-MVE is almost the same, and even SAC-MVE is a little faster than SAC-MPPSE. In addition, estimation quality is a key factor affecting performance. To quantitatively analyze the estimation quality, we then plot the average and standard deviation of their value estimation biases in Figure \ref{fig6(c)} and \ref{fig6(d)}. We remark that SAC-MPPVE and SAC-MVE (SAC-MVE (MPI)) have almost identical biases in value estimation, although the bias of SAC-MPPVE is somewhat larger in the early stage because of a larger action space for multi-step plan value function to fit. Summarizing these results, we can find that the value estimation of SAC-MPPVE and SAC-MVE (SAC-MVE (MPI)) are almost the same in learning speed and quality, but the performance of SAC-MPPVE is significantly better than that of SAC-Rollout-MVE. Therefore, we can conclude that our proposed model-based planning policy improvement is effective.

\end{document}